\newcommand{\N}{\mathbb{N}}
\newcommand{\R}{\mathbb{R}}
\newcommand{\score}[2]{#1 $\pm$ #2}
\newcommand{\bfscore}[2]{\textbf{#1} $\mathbf{\pm}$ \textbf{#2}}
\newcommand{\aucscore}[2]{#1 $\pm$ #2}
\newcommand{\bfaucscore}[2]{\textbf{#1} $\mathbf{\pm}$ \textbf{#2}}
\newtheorem{theorem}{Theorem}[section]
\newtheorem{proposition}[theorem]{Proposition}
\newcommand\blfootnote[1]{%
  \begingroup
  \renewcommand\thefootnote{}\footnote{#1}%
  \addtocounter{footnote}{-1}%
  \endgroup
}
\title{Rough Transformers: Lightweight and Continuous Time Series Modelling through Signature Patching}
\author{Fernando Moreno-Pino$^{1,}$\thanks{Equal contribution.}\quad Álvaro Arroyo$^{1,2,*}$ \quad Harrison Waldon$^{1,*}$ \\ $\textbf{Xiaowen Dong}^{1,2}$ \quad \textbf{Álvaro Cartea}$^{1,3}$  \vspace{2mm} \\
$^{1}$ Oxford-Man Institute, University of Oxford \\ 
$^{2}$ Machine Learning Research Group, University of Oxford  \\
$^{3}$ Mathematical Institute, University of Oxford \\
}
\begin{document}

\maketitle

\begin{abstract}
Time-series data in real-world settings typically exhibit long-range dependencies and are observed at non-uniform intervals. In these settings, traditional sequence-based recurrent models struggle. To overcome this, researchers 
often replace recurrent architectures with Neural ODE-based models to account for irregularly sampled data and use Transformer-based architectures to account for long-range dependencies. Despite the success of these two approaches, both incur very high computational costs for input sequences of even moderate length. To address this challenge, we introduce the Rough Transformer, a variation of the Transformer model that operates on continuous-time representations of input sequences and incurs significantly lower computational costs. In particular, we propose \textit{multi-view signature attention}, which uses path signatures to augment vanilla attention and to capture both local and global (multi-scale) dependencies in the input data, while remaining robust to changes in the sequence length and sampling frequency and yielding improved spatial processing. We find that, on a variety of time-series-related tasks, Rough Transformers consistently outperform their vanilla attention counterparts while obtaining the representational benefits of Neural ODE-based models, all at a fraction of the computational time and memory resources. \blfootnote{Email: \{\texttt{fernando.moreno-pino}, \texttt{alvaro.arroyo}\}@\texttt{eng.ox.ac.uk}}\blfootnote{Code available at: \url{https://github.com/AlvaroArroyo/RFormer}}
\end{abstract}

\section{Introduction}\label{sec:intro}

Real-world sequential data in areas such as healthcare \cite{perveen2020handling}, finance \cite{hautsch2004modelling}, and biology \cite{fleming2018correcting} often are irregularly sampled, of variable length, and exhibit long-range dependencies. Furthermore,
these data, which may be drawn from financial limit order books \cite{cartea2015algorithmic} or EEG readings \cite{vahid2020applying}, are often sampled at high frequency, yielding long sequences of data. Hence, many popular machine learning models struggle to model real-world sequential data, due to input dimension inflexibility, memory constraints, and computational bottlenecks.
Rather than treating these data as \textit{discrete} sequences, effective theoretical models often assume data are generated from some underlying \textit{continuous-time} process \cite{morariu2022state, ratcliff2016diffusion}. Hence, there is an increased interest in developing machine learning methods that use \textit{continuous-time} representations to analyze sequential data. 

One recent approach to modelling continuous-time data involves the development of continuous-time analogues of standard deep learning models, such as Neural ODEs \cite{chen2018neural} and Neural CDEs \cite{kidger2020neural}, which extend ResNets \cite{he2016deep} and RNNs \cite{funahashi1993approximation}, respectively, to continuous-time settings.
Instead of processing discrete data directly, these models operate on a latent continuous-time representation of input sequences. This approach is successful in continuous-time modelling tasks where standard deep recurrent models fail. In particular, extensions of vanilla Neural ODEs to the time-series setting \cite{rubanova2019latent, kidger2020neural} succeed
in various domains such as adaptive uncertainty quantification \cite{norcliffe2020neural}, counterfactual inference \cite{seedat2022continuous}, or generative modelling \cite{calvo2023beyond}.

In many practical settings, such as financial market volatility \cite{corsi2009simple, moreno2022deepvol} or heart rate fluctuations \cite{hausdorff1996multiscaled}, continuous-time data also exhibit long-range dependencies. That is, data from the distant past may impact the system's current behavior. Deep recurrent models struggle in this setting due to vanishing gradients, whereas continuous-time analogues of these models have been shown to address this difficulty \cite{lechner2020learning}. Several recent works \cite{melnychuk2022causal,nguyen2022transformer} also successfully extract long-range dependencies from sequential data with Transformers \cite{vaswani2017attention}, which learn temporal dependencies of a tokenized representation of input sequences. Extracting such temporal dependencies requires a positional encoding of input data, because the attention mechanism is permutation invariant, which projects data into some latent space.
The parallelizable nature of the Transformer allows for rapid training and evaluation on sequences of moderate length and it contributes to its success in fields such as natural language processing (NLP).

\begin{wrapfigure}{r}{5cm}

    \centering
\scalebox{1}{
\begin{tikzpicture}
    \draw[->,thick] (9,0.85) -- (13,0.85) node[anchor=west] {};
    \node at (11,1.85) {\includegraphics[trim={4.75cm 1.75cm 3.4cm 1.95cm},clip, scale=0.15]{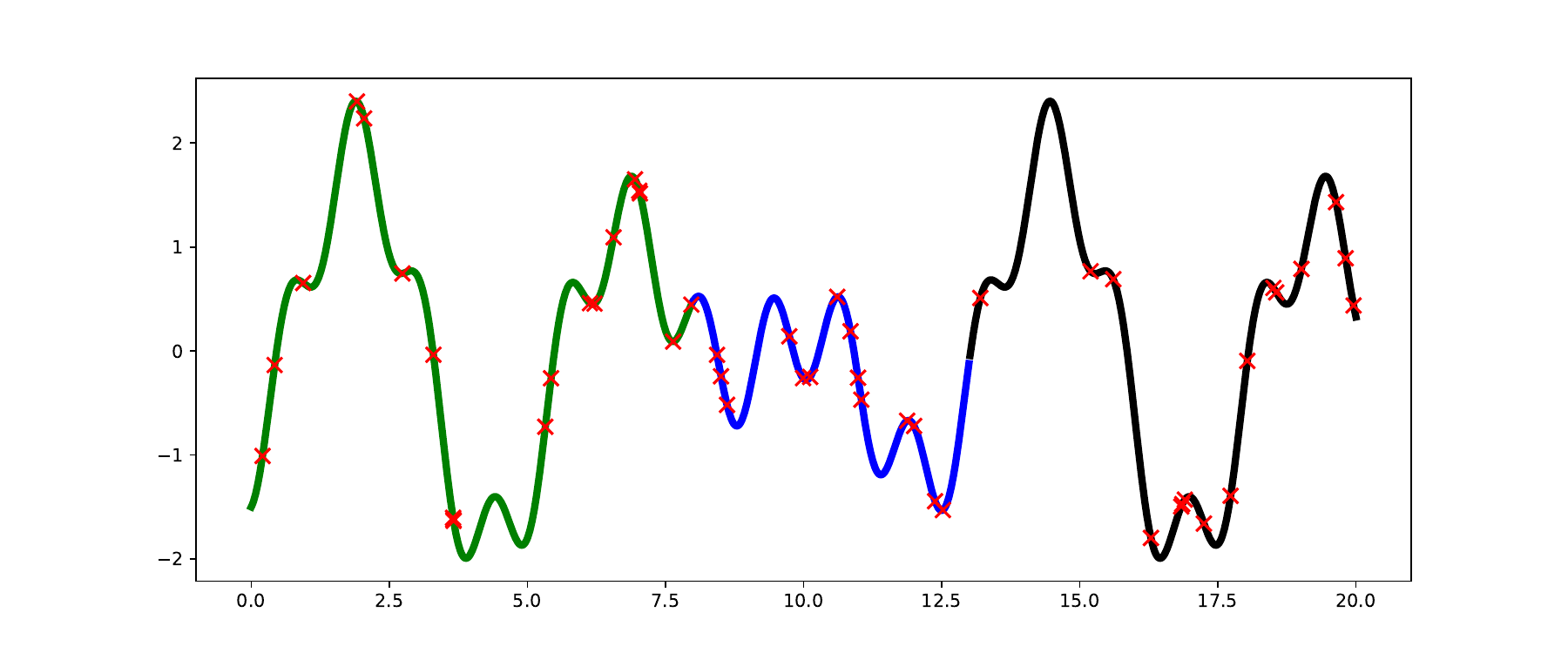}};

    \node[anchor=north] at (13,0.65) {$t$};



    \draw[->, thick] (10.5,2.8) to[bend left] (10,3.6);

    \draw[->, thick] (11.5,2.8) to[bend right] (12,3.6);

    \node[draw, rounded corners, minimum width=2cm, minimum height=1.65cm, thick] (box) at (12.25,4.5) {};
    \node[anchor=north west, inner sep=2mm] at (box.north west) {\footnotesize Global View};

     \node at (12.25,4.25) {\includegraphics[trim={3.5cm 1.75cm 3cm 1.95cm},clip, scale=0.1]{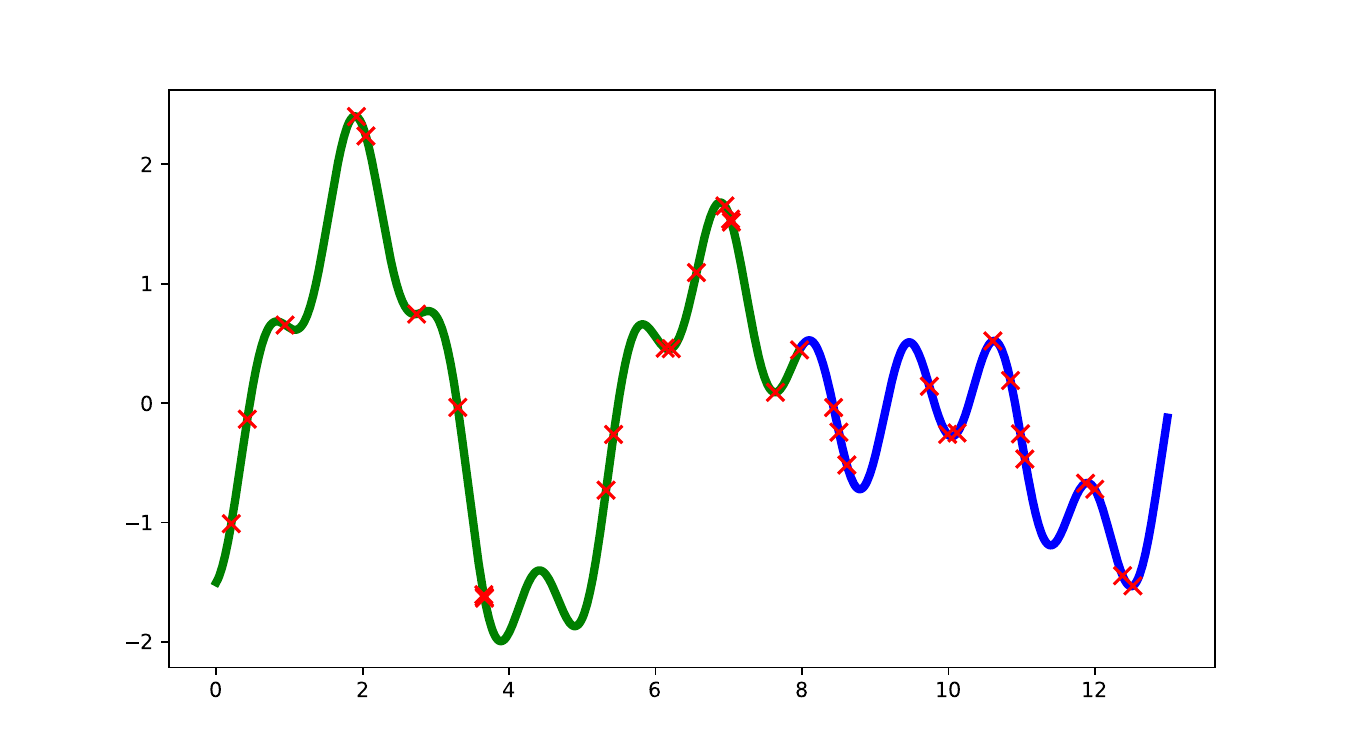}};
    
    \node[draw, rounded corners, minimum width=2cm, minimum height=1.65cm, thick] (box) at (10,4.5) {};
    \node[anchor=north west, inner sep=2mm] at (box.north west) {\footnotesize Local View};
    \node at (10,4.25) {\includegraphics[trim={3.5cm 1.75cm 3cm 1.95cm},clip, scale=0.1]{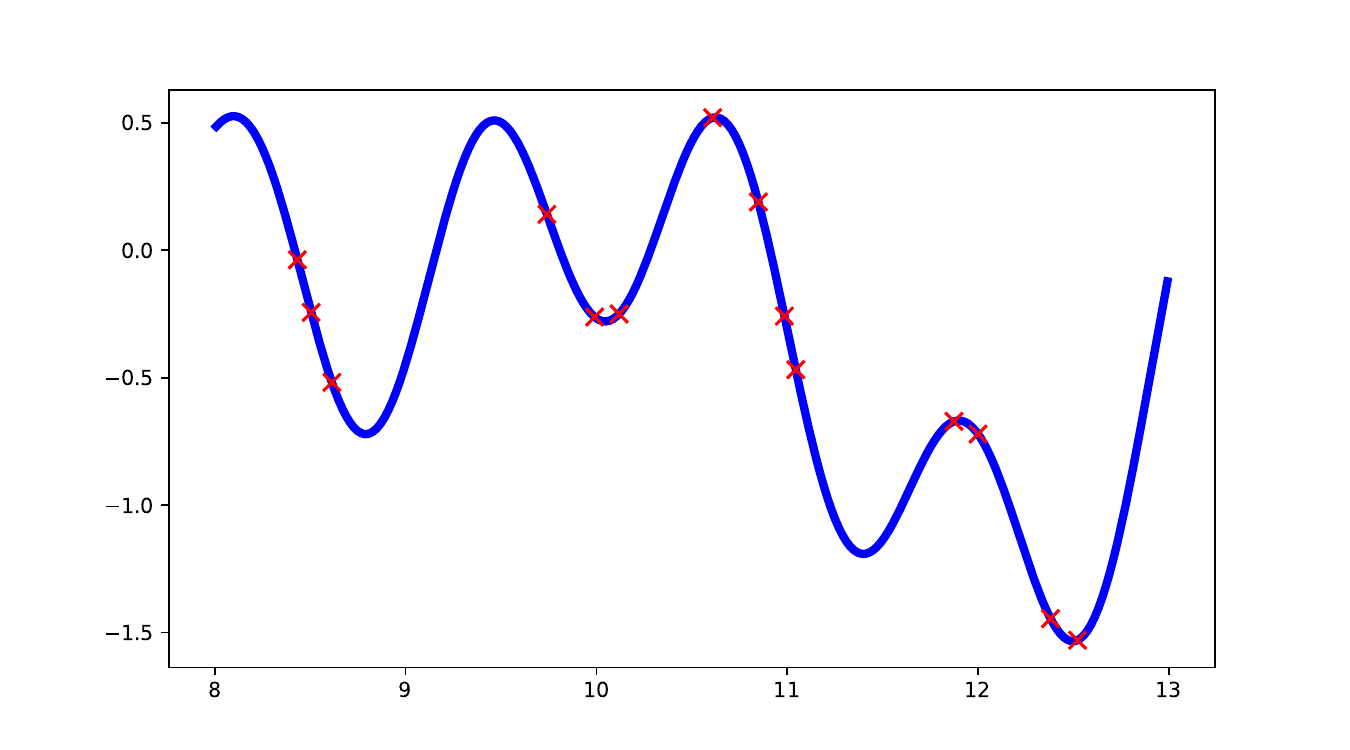}};

    \draw[->,thick] (10,5.4) -- (10,5.8);
    \draw[->,thick] (12.25,5.4) -- (12.25,5.8);

     \node at (10,6.1){ $S(\tilde{X})_{t_{k-1},t_k}$};
     \node at (12.25,6.1){ $S(\tilde{X})_{0,t_k}$};

     \node at (11,7.5 ){ $M(\mathbf{X})_{k}=\begin{bmatrix}
           S(\tilde{X})_{0,t_k} \\
           S(\tilde{X})_{t_{k-1},t_k} \\  
         \end{bmatrix}$};
         
     \draw [decorate, 
    decoration = {calligraphic brace,
        raise=5pt,
        amplitude=5pt}] (9.1,6.35) --  (12.6,6.35);

\end{tikzpicture}}
\caption{A representation of the multi-view signature. The continuous-time path is irregularly sampled at points marked with a red $x$. The local and global signatures of a linear interpolation of these points are computed and concatenated to form the multi-view signature. The multi-view signature transform consists of $\overline{L}$ multi-view signatures.}
\label{fig:loc_glob}
\end{wrapfigure}

While the above approaches succeed in certain settings, several limitations hinder their wider applications. On the one hand, Neural ODEs and their analogues \cite{kidger2020neural, rubanova2019latent} bear substantial computational costs when modelling long sequences of high dimension; see \cite{morrill2021neural}. On the other hand, Transformers operate on discrete-time representations of input sequences, whose relative ordering is represented by the positional encoding. This representation may inhibit their expressivity in continuous-time data modelling tasks \cite{zeng2023transformers}. Moreover, Transformer-based models suffer from a number of difficulties, including (i) input sequences must be sampled at the same times, (ii) the sequence length must be fixed, and (iii) the computational cost scales quadratically in the length of the input sequence. These difficulties severely limit the application of Transformers to continuous-time data modelling.

\textbf{Contributions} \textbf{1)} We introduce \textit{Rough Transformers}, a variant of the Transformer architecture amenable to the processing of continuous-time signals, which can be easily integrated into existing code-bases. 
The Rough Transformer is built upon the path signature from Rough Path Theory \cite{lyons2007differential}. We define a novel, multi-scale transformation which projects discrete input data to a continuous-time path and compresses the input data with minimal information loss. Moreover, this transformation is an efficient feature representation of continuous-time paths, because linear functionals of path signatures approximate continuous functions of paths arbitrarily well (see Theorem \ref{thm:universal approx} in Appendix \ref{app:sig}). 

\textbf{2)} We introduce the \textit{multi-view attention mechanism} to extract both local and global dependencies of very long time-series efficiently. 
This mechanism operates directly on continuous-time representations of data without the need for expensive numerical solvers or constraints on the smoothness of the data stream. Moreover, the multi-view attention mechanism is provably robust to irregularly sampled data.

\textbf{3)} We carry out extensive experimentation on long and irregularly sampled time-series data. In particular, we show that Rough Transformers (i) improve the learning dynamics of the Transformer, making it more sample-efficient and allowing it to achieve better out-of-sample results, (ii) reduce the training cost by a factor of up to $25\times$ when compared with vanilla Transformers and more when compared with Neural ODE based architectures,
(iii) maintain similar performance when data are irregularly sampled, where traditional recurrent-based models suffer a substantial decrease in performance \cite{rubanova2019latent},  and (iv) yield improved spatial processing, accounting for relationships between different temporal channels without having to pre-define a specific inter-channel relation structure.

\section{Background and Methodology}

\paragraph{Problem Formulation.}

In many real-world scenarios, sequential data are time-series sampled from some underlying continuous-time process, so datasets consist of long, irregularly sampled sequences of varied lengths. In these settings, the problem of sequence modelling is described as follows. Let $C(\R^+; \R^d) = \{g : \R^+ \rightarrow \R^d \, | \, g \text{ continuous}\}$, and consider $\widehat{X} \in C(\R^+; \R^d)$ which we call a continuous-time \textit{path}. A time-series of length $L$ with sampling times $\mathcal{T}_\mathbf{X} = \{t_i\}_{i=1}^L \subset \R^+$ is defined as $\mathbf{X} = ((t_1, X_1), ..., (t_L, X_L))$, where $X_i = \widehat{X}(t_i) \in \R^d$. Now, define a continuous function on paths $f : C(\R^+; \R^d) \rightarrow \R^k$. Next define a dataset $\mathcal{D} = \left\{ (\mathbf{X}^i, f(\widehat{X}^i))_{i = 1}^N\right\}$. We seek to approximate the function $f$ from the set $\mathcal{D}$ for some downstream task. Importantly, we do not assume that $\mathcal{T}_\mathbf{X} = \mathcal{T}_\mathbf{Y}$ for all $\mathbf{X}, \mathbf{Y} \in \mathcal{D}$, so that $\mathcal{D}$ may be irregularly sampled.

\paragraph{Sequence Modelling with Transformers.}
Transformers are used extensively as a baseline architecture to approximate functions of discrete-time sequential data and are successfully applied to settings when input sequences are fixed in length, relatively short, and sampled at regular intervals. First, the Transformer projects input time series $\mathbf{X} \in \R^{L \times d}$ to a high-dimensional space $\mathbf{X} \mapsto T(\mathbf{X}) \in \R^{L \times d'}$ for $d' >> d$ using some linear positional encoding $T : \R^{L \times d} \rightarrow \R^{L \times d'}$.
Next, a latent representation of the encoded sequence is learned by a multi-headed self-attention mechanism which splits $T(\mathbf{X})$ into $H$ distinct query, key, and value sequences: $Q_h = T(\mathbf{X})W^Q_h$, $K_h = T(\mathbf{X})W^K_h$, $V_h = T(\mathbf{X})W^V_h$, respectively, with $h = 1, ..., H$ and weight matrices $W^Q_h, W^K_h, W^V_h \in \R^{d' \times d'}$. The multi-head self-attention calculation for each head is given by
\begin{align}\label{def:attention}
    O_h = \mathrm{softmax}\,\left(\frac{Q_h K_h^\intercal}{\sqrt{d_k}} \right)V_h\,,
\end{align}
and the latent representation is projected to the output space $\R^k$ using a multi-layer perceptron (MLP).

The input length $L$ of the MLP and the Transformer is fixed by assumption. To evaluate the Transformer on a time-series $\mathbf{X}$ with  $|\mathcal{T}_\mathbf{X}| \neq n$, one must perform some transformation (interpolation, extrapolation, etc.) which may degrade the 
 performance of the model. Furthermore, the memory and time complexity of the Transformer is of order $O(L^2d)$, which presents a substantial difficulty in modelling long sequences.

\paragraph{Rough Path Signatures.}\label{subsec:rough transformer}

Broadly, the difficulties faced by the Transformer in modelling time-series stem from time-series being sampled from underlying \textit{continuous-time} objects, while the attention mechanism underpinning the Transformer is designed to model discrete sequences. To address these difficulties, Rough Transformers augment standard Transformers by lifting the input time-series to the space of continuous-time functions and performing the self-attention calculation in this infinite-dimensional space. To achieve this, we use the path signature from Rough Path Theory. 

For a continuous-time path $\widehat{X} \in C^1_b(\R^+; \R^d)$ and times $s, t \in \R^+$, the path signature of $\widehat{X}$ from $s$ to $t$, denoted $S(\widehat{X})_{s, t}$, is defined as follows. First, let 
\begin{align}
	\mathcal{I}_d = \left\{ (i_1, ..., i_p) : i_j \in \{1, ..., d\} \,\forall\, j \text{ and } p \in \N \right\}\,
\end{align}
 denote the set of all $d$-multi-indices and $\mathcal{I}_d^n = \{ I \in \mathcal{I}_d : |I| = n\}$. Next, set $S(\widehat{X})_{s, t}^{0} := 1$ and for any $I \in \mathcal{I}_d$, define 
 {\footnotesize\begin{align}
 	S(\widehat{X})_{s, t}^I = \int_{s < u_1 < ... < u_p < t} \dot{\widehat{X}}^{i_1}(u_1) \cdots \dot{\widehat{X}}^{i_p}(u_p)\, du_1\dots du_p\,,
 \end{align}}
 where $\dot{\widehat{X}}^{j} = d\widehat{X}^j/dt$. Abusing notation, define level $n$ of the signature as
\begin{align}
		S^n(\widehat{X})_{s, t} = \left\{S(\widehat{X})_{s, t}^I : I \in \mathcal{I}^n_d\right\}\,.
\end{align}
and define the signature as the infinite sequence
\begin{align}\label{def:signature}
	S(\widehat{X})^n_{s, t} = (S(\widehat{X})^0_{s, t}, S(\widehat{X})^1_{s, t} , ..., S(\widehat{X})^n_{s, t} , ...)\,.
\end{align}
Finally, define the truncation of the signature $S(\widehat{X})_{s, t}^{\leq n} = (S(\widehat{X})_{s, t}^0 , ..., S(\widehat{X})_{s, t}^n )$, where $S(\widehat{X})^n_{s, t}$ can be interpreted as an element of the \textit{extended tensor algebra} of $\R^d$:
\begin{align}
\label{eq:inv}
    T((\R^d)) = \left\{(a_0, ..., a_n, ...) : a_n \in \R^{d \otimes n}\right\}\,.
\end{align}
Analogously, we say that $S(\widehat{X})_{s, t}^{\leq n} \in T((\R^d))_{\leq n}$. A central property of the signature is that is invariant with respect to time-reparameterization \cite{lyons2007differential}. That is, let $\gamma : [0, T] \rightarrow [0, T]$ be surjective, continuous, and non-decreasing. Then we have 
\begin{align}\label{eq:invariance}
	S(\widehat{X})_{0, T} = S(\widehat{X}\circ \gamma)_{0, T}\,,
\end{align}
which will be crucial to demonstrate the Rough Transformer's robustness to irregularly sampled data. 

In contrast to wavelets or Fourier transforms, which parameterize paths on a functional basis, the signature provides a basis for functions of continuous paths. Hence, the path signature is well-suited to sequence modelling tasks in which one seeks to learn a function of the underlying functional.  For a more rigorous presentation of signatures and a description of additional properties, see Appendix \ref{app:sig} and \citet{lyons2007differential}.

\section{Rough Transformers}\label{sec:model}

Now, we construct the Rough Transformer, a Transformer-based architecture that operates on continuous-time sequential data by means of the path signature. 

Let $\mathcal{D}$ be a dataset of irregularly sampled time-series.
To project a discretized time-series $\mathbf{X} \in \mathcal{D}$ to a continuous-time object, let $\tilde{X}$ denote the piecewise-linear interpolation of $\mathbf{X}$.\footnote{Any continuous-time interpolation of $\mathbf{X}$ can be used, e.g., splines. However, the signature computation of piecewise-linear paths is particularly fast; see Appendix \ref{app:sig}.} Next, for $t_k \in \mathcal{T}$,  define the \textit{multi-view signature} 
\begin{align}
	M(\mathbf{X})_{k} := \left(S(\tilde{X})_{0, t_k}, S(\tilde{X})_{t_{k-1}, t_k}\right)
 \,.
\end{align}
In what follows, we refer to the components $\left(S(\tilde{X})_{0, t_k}, S(\tilde{X})_{t_{k-1}, t_k}\right)$ as \textit{global} and \textit{local}, respectively; see Figure \ref{fig:loc_glob}. Intuitively, one can interpret the global component as an efficient representation of long-term information (see Theorem \ref{thm:universal approx} in Appendix \ref{app:sig}), and the local component as a type of convolutional filter that is invariant to the sampling rate of the signal.
Now, define the \textit{multi-view signature transform} $M(\mathbf{X}) = (M(\mathbf{X})_{1}, ..., M(\mathbf{X})_{\bar{L}}) \,,$
and denote by $M(\mathbf{X})^{\leq n}$ the truncated signature for a truncation level $n$. 
Next, define the \textit{multi-view attention mechanism}, which uses the multi-view signature transform to extend the standard attention mechanism to the space of continuous functions \cite{lyons2007differential}. First, fix a truncation level $n \in \N$, and let $\bar{d} \in \N$ be such that $M(\mathbf{X})^{\leq n}_{k} \in \R^{\bar{d}}$. For $h = 1, ..., H$ let $W^{\tilde{Q}, \tilde{K}, \tilde{V}}_h \in \R^{\bar{d} \times \bar{d}'} $ for some $\bar{d}' \in \N$, and let
{\footnotesize\begin{align}
    \tilde{Q}_h = M(\mathbf{X})^{\leq n}W^{\tilde{Q}}_h, \quad
    \tilde{K}_h = M(\mathbf{X})^{\leq n}W^{\tilde{K}}_h, \quad 
    \tilde{V}_h = M(\mathbf{X})^{\leq n}W^{\tilde{V}}_h\,. 
\end{align}}
Then, the attention calculation is given by
\begin{align}\label{}
    O_h = \mathrm{softmax}\,\left(\frac{\tilde{Q}_h \tilde{K}_h^\intercal}{\sqrt{\bar{d}'}} \right)\tilde{V}_h\,.
\end{align}
Notice that the attention calculation is similar to \eqref{def:attention}, however, we stress that the multi-view attention is built on \textit{continuous-time} objects, the signatures, while the standard attention mechanism acts on discrete objects. The multi-view signature provides a compressed representation of the time series, minimizing the computational costs associated to quadratic scaling without excessive loss of representational capacity, see Appendix \ref{app:multiview}.

\begin{figure}[H]
    \centering
    \begin{subfigure}[b]{0.3\textwidth}
        \centering
        \includegraphics[width=\textwidth]{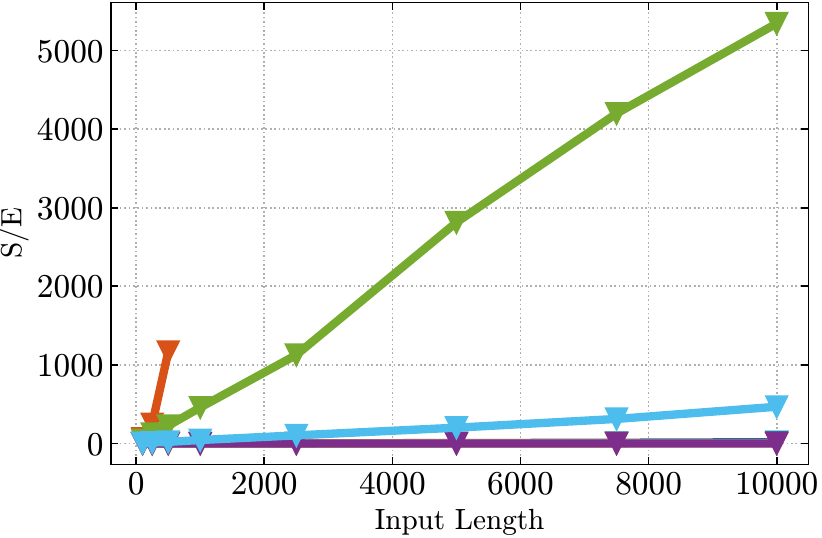}
        \label{fig:growth_10k}
    \end{subfigure}
    \begin{subfigure}[b]{0.3\textwidth}
        \centering
        \includegraphics[width=\textwidth]{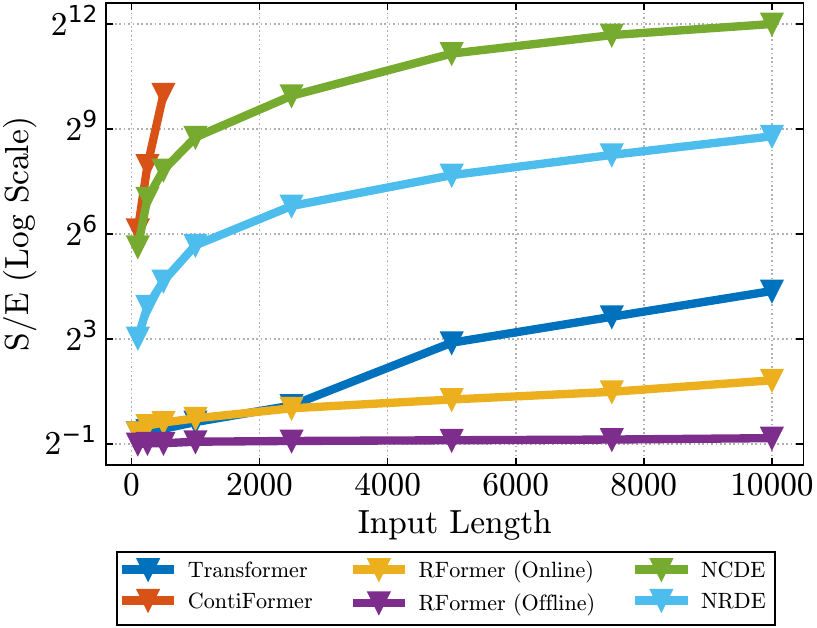}
        \vspace{-0.42cm}
        \label{fig:log_growth_10k}
    \end{subfigure}
    \begin{subfigure}[b]{0.3\textwidth}
        \centering
        \includegraphics[width=\textwidth]{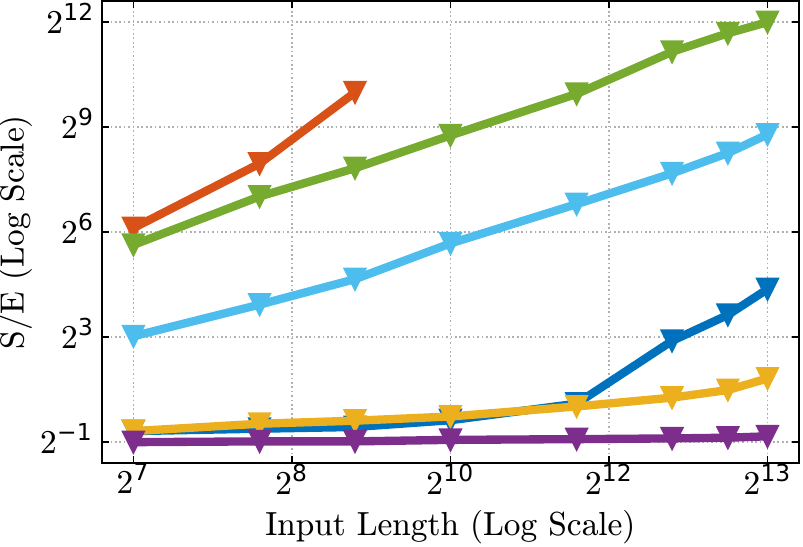}
        \vspace{-0.1cm}
        \label{fig:loglog_growth_10k}
    \end{subfigure}
    \caption{Seconds per epoch for growing input length and for different model types on the sinusoidal dataset. \textbf{Left:} Log Scale. \textbf{Middle:} Regular Scale. \textbf{Right:} Log-log scale. When a line stops, it indicates an OOM error.}
    \label{fig:test_drop_1}
\end{figure}

\subsection{Advantages of Rough Transformers}\label{subsec:advantages}

\textbf{Computational Efficiency.} As demonstrated  in Section \ref{sec:experiments}, multi-view attention mechanism can substantially reduce the computational cost of vanilla Transformers. In particular, the attention calculation decreases from $O(L^2\,d)$ in the vanilla case to $O(\overline{L}^2\,d)$, where $\overline{L} << L$ with Rough Transformers. This enables both faster wall-clock training time and the ability to process long input sequences which would otherwise yield out-of-memory errors for the vanilla Transformer, see Figure \ref{fig:test_drop_1}. Moreover, the multi-view attention mechanism does not require backpropagation through the signature calculation, which can be computed \textit{offline}. This is significantly more computationally efficient compared with the complexity of computing signatures batch-wise in every training step. Finally, the signature of piecewise-linear paths can be computed explicitly, see Appendix \ref{app:sig}, and there are a number of Python packages devoted to optimized signature calculation \cite{kidger2020signatory, reizenstein2018iisignature}.

\textbf{Variable Length and Irregular Sampling.}  
The multi-view signature transform underpinning Rough Transformers is evaluated by constructing a continuous-time interpolation of input data and computing a series of iterated integrals of this interpolation. The bounds of these integrals are a fixed set of time points, meaning that the sequence length of the multi-view attention mechanism is fixed and independent of the sequence length of input samples.  
Furthermore, the following proposition shows that the output of the Rough Transformer for two (possibly irregular) samplings of the same path is similar.

\begin{proposition}\label{prop:invariance}
    Let $\mathbb{T}$ be a Rough Transformer. Suppose $\widehat{X}: [0, T] \rightarrow \R^d$ is a continuous-time process, and let $\gamma : [0, T] \rightarrow [0, T]$ denote a time-reparameterization. Suppose $\mathbf{X}$ and $\mathbf{X}'$ are samplings of $\widehat{X}$ and $\widehat{X}\circ \gamma$, respectively. Then $\mathbb{T}(\mathbf{X}) \approx \mathbb{T}(\mathbf{X}')$. 
\end{proposition}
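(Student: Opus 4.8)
The plan is to decompose the Rough Transformer as a composition $\mathbb{T} = \Phi \circ M^{\leq n}$, where $M^{\leq n}$ denotes the truncated multi-view signature transform and $\Phi$ bundles together everything that acts on the signature features: the linear query/key/value projections $W^{\tilde Q,\tilde K,\tilde V}_h$, the softmax self-attention, and the output MLP. Each of these maps is Lipschitz continuous (linear maps trivially, the softmax is $1$-Lipschitz, and the MLP is Lipschitz provided its activations are), so $\Phi$ is Lipschitz with some constant $L_\Phi$. Consequently $\|\mathbb{T}(\mathbf{X}) - \mathbb{T}(\mathbf{X}')\| \leq L_\Phi \,\|M^{\leq n}(\mathbf{X}) - M^{\leq n}(\mathbf{X}')\|$, and it suffices to show that the two multi-view signature transforms are close.

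First I would reduce the comparison of $M^{\leq n}(\mathbf{X})$ and $M^{\leq n}(\mathbf{X}')$ to a comparison of signatures of the continuous-time paths themselves. The multi-view transform is built from the piecewise-linear interpolations $\tilde X$ and $\tilde X'$ of the two samplings, and standard rough-path estimates give that $\tilde X \to \widehat X$ and $\tilde X' \to \widehat X \circ \gamma$ in $p$-variation as the samplings become dense, while the truncated signature map $S^{\leq n}$ is continuous in the $p$-variation topology (its components obey factorial-decay bounds). Hence each windowed signature $S(\tilde X)_{s,t}$ may be replaced, up to a controllable error, by $S(\widehat X)_{s,t}$, and likewise $S(\tilde X')_{s,t}$ by $S(\widehat X \circ \gamma)_{s,t}$.

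The heart of the argument is then the reparameterization invariance \eqref{eq:invariance}. Since $\gamma$ is surjective, continuous, and non-decreasing it fixes the endpoints, $\gamma(0)=0$ and $\gamma(T)=T$, so the global component over the full horizon satisfies $S(\widehat X \circ \gamma)_{0,T} = S(\widehat X)_{0,T}$ exactly; the same identity holds on any window once we track how $\gamma$ maps its endpoints. Combining this with the interpolation estimate yields $M^{\leq n}(\mathbf{X}) \approx M^{\leq n}(\mathbf{X}')$, and propagating through the Lipschitz bound gives $\mathbb{T}(\mathbf{X}) \approx \mathbb{T}(\mathbf{X}')$.

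The main obstacle I anticipate is that invariance is clean only for the full-horizon global signature: the local windows $[\tau_{k-1},\tau_k]$ and the intermediate global windows $[0,\tau_k]$ are mapped by $\gamma$ to windows with shifted endpoints, so $S(\widehat X \circ \gamma)_{\tau_{k-1},\tau_k} = S(\widehat X)_{\gamma(\tau_{k-1}),\gamma(\tau_k)}$ need not equal $S(\widehat X)_{\tau_{k-1},\tau_k}$ unless $\gamma$ fixes the virtual grid. This is precisely why the statement is approximate. I would close the gap in one of two ways: either restrict attention to reparameterizations that preserve the fixed grid points, under which every windowed component is exactly invariant up to interpolation error, or bound the residual discrepancy by the modulus of continuity of $\gamma$ together with the factorial-decay control on the signature, so that the error is small whenever $\gamma$ is close to the identity or the windows are fine. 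In either case the dominant, exactly-invariant global term controls the output and the remaining terms are absorbed into the $\approx$.
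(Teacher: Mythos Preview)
Your proposal is correct and follows essentially the same two-step outline as the paper: invoke the reparameterization invariance of the signature to conclude $M(\mathbf{X}) \approx M(\mathbf{X}')$, then push this through the continuity of the attention mechanism and the final MLP. Your treatment is in fact more careful than the paper's three-line argument---you make the Lipschitz structure of $\Phi$ explicit, separate out the interpolation error, and correctly identify the window-shifting subtlety for the local and intermediate-global components that the paper simply absorbs into its $\approx$ signs without comment.
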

\begin{proof}
    By \eqref{eq:invariance}, $S(\widehat{X})_{s, t} = S(\widehat{X}\circ \gamma)_{s, t}$ for all $s, t \in [0, T]$. Hence, one has $M(X^1) \approx M(X^2)$. Finally, $\mathbb{T}(X^1) \approx \mathbb{T}(X^2)$ because the attention mechanism and final MLP are both continuous. 
\end{proof}

Hence, the Rough Transformer is robust to irregular sampling. In many tasks, the sampling times convey important information about the time-series. In these settings, one may augment the input time-series with its sampling times, that is, write $X = ((t_0, X_0), ..., (t_L, X_L))$. 

\textbf{Spatial Processing.} While an interpolation of input data could be sampled to make vanilla Transformers independent of the length of the input sequence, important locality information could be lost, see Appendix \ref{app:downsampling}. Instead, Rough Transformers summarize spatial interactions between channels by means of the multi-view signature transform. One may notice that in \eqref{def:signature}, the dimension of the signature grows exponentially in the level of the signature $n$. In particular, when $X_i \in \R^d$, $|S(\tilde{X})_{0, t}^{\leq n}| = \frac{d(d^n - 1)}{d - 1} = O(d^n)$, so the multi-view attention calculation is of order $O(\bar{L}^2 d^n)$. In many practical time-series modelling problems, however, the value of $d$ is not very large. The signature terms also decay factorially in the signature level $n$ (see Proposition \ref{prop:decay} in Appendix \ref{app:sig}), so in practice, one may take the value of $n$ to be small without sacrificing performance. The majority of computational savings result from the reduction of the sequence length to $\bar{L}$, and in practice, we take $\bar{L} << L$.

When the dimension $d$ is large, there are three possible remedies to maintain computational efficiency. First, instead of computing the signature in $M(X)_k =(S(X)_{0, t_k}, S(\tilde{X}_{t_{k-1}, t_k})$, one may compute the \textit{log-signature}, which is a compressed version of the signature \cite{reizenstein2017calculation}. When the dimension is large enough such that the log-signature is computationally infeasible, one may instead compute the \textit{univariate} signatures of features coupled with the time channel. That is, consider $\widehat{X} \in C([0, T]; \R^d)$, with $\widehat{X}(t) = (\widehat{X}_1(t), ..., \widehat{X}_d(t))$. Denote the time-added function $\overline{X}_i(t) := (t, \widehat{X}_i(t))$. Then we define the \textit{univariate multi-view signature}
\begin{align}
    \widehat{M}(\widehat{X})_k = \left(M(\overline{X}_1)_k, ..., M(\overline{X}_d)_k\right)\,.
\end{align}
The attention mechanism in this case is constructed as before. Fixing the maximum signature depth to be some value $n^*$, one sees that the number of features in the univariate multi-view signature is approximately $2^{n^*}d$. In practice we find that $n^* \leq 5$ provides sufficient performance, so the order of the attention calculation is $O(C\,\bar{L}^2\,d)$ for $C \leq 2^{n^*}$. Finally, one may use randomized signatures to reduce dimension by using a Johnson-Lindenstrauss-type projection to a low-dimensional latent space and computing the signature in this space, as in \cite{cuchiero2021discrete, compagnoni2023effectiveness}.

\section{Experiments}\label{sec:experiments}

\vspace{-0.2cm}

In this section, we present empirical results for the effectiveness of the Rough Transformer, hereafter denoted \texttt{RFormer}, on a variety of time-series-related tasks.  Experimental and hyperparameter details regarding the implementation of the method are in Appendices \ref{app:exp_details} and \ref{app:hyper}. We consider long multivariate time-series as our main experimental setting because we expect signatures to perform best in this scenario. Additional experimentation on long-range reasoning tasks on image-based datasets is left for future work, as these would likely require additional inductive biases.

To benchmark \texttt{RFormer}, we consider both discrete-time and continuous-time models. In particular, we include as main baselines traditional RNN models (\texttt{GRU} \cite{cho2014learning}), ODE-based methods designed for sequential data (\texttt{Neural-CDE} \cite{kidger2020neural}), as well as ODE-based methods explicitly designed for long time-series (\texttt{Neural-RDE} \cite{morrill2021neural}).\footnote{We only benchmark \texttt{Neural-CDE} 
 models in settings where time series are of relatively short length, due to the computational demands of this model for longer sequences.} Furthermore, we compare against a vanilla \texttt{Transformer} \cite{vaswani2017attention} which is the \texttt{RFormer} backbone. Finally, we present comparisons with a recent continuous-time Transformer model, \texttt{ContiFormer} \cite{chen2023contiformer}, to highlight the computational efficiency gap between \texttt{RFormer} and similar continuous-time models. We note that the first two tasks focus on evaluating the performance improvement of \texttt{RFormer} over the \texttt{Transformer} baseline. For other long-range tasks, we include comparisons to recent state-space models \cite{gu2023, orvieto2023, smithsimplified}. In the irregular sampling regime, we benchmark against state-of-the-art models tailored to that setting \cite{ohstable, schirmer2022modeling}.
 See Appendix \ref{app:related} for additional discussion on related models and more details about our experimental choices.
 
\vspace{-0.2cm}

\subsection{Time Series Processing}
\label{subsec:time_series_processing}

\vspace{-0.2cm}

\paragraph{Frequency Classification.} 

\begin{wrapfigure}{R}{7.5cm}
    \vspace{-0.4cm}
    \centering
	\includegraphics[height=3.6cm,width=3.6cm]{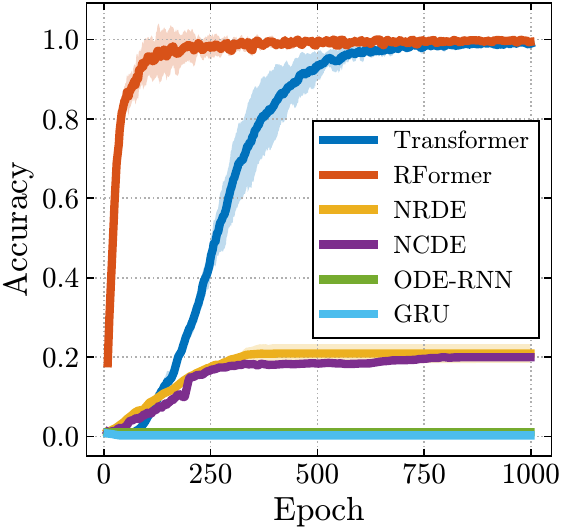}
    \includegraphics[height=3.6cm,width=3.6cm]{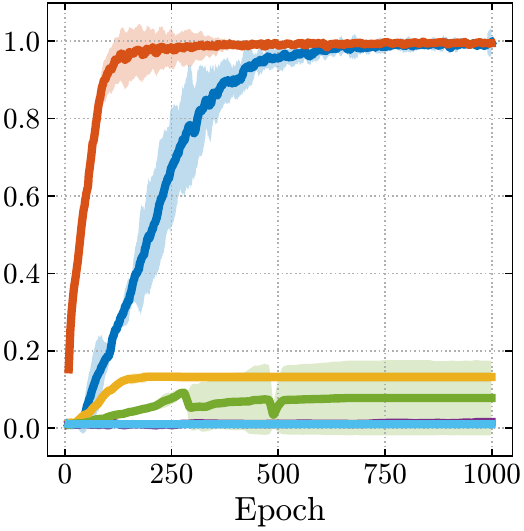}
	\caption{ Test accuracy per epoch for the frequency classification
task across three random seeds. \textbf{Left:} Sinusoidal dataset. \textbf{Right:} Long Sinusoidal dataset.}
	\label{fig:test_full_synthetic1}
\end{wrapfigure} 
 \vspace{0.2cm}

Our first experiment is based on a set of synthetically generated time series from continuous paths of the form
\begin{equation}
\label{eq:sin}
    \widehat{X}(t) = g(t)\sin(\omega\, t + \nu) + \eta(t)\,,
\end{equation}
where $g(t)$ is a non-linear trend component, $\nu$ and $\eta$ are two noise terms, and $\omega$ is the frequency. Here, the task of the model is to classify the time-series according to its frequency $\omega$.  We consider $1000$ samples in $100$ classes with $\omega$ evenly distributed from $10$ to $500$. Each time-series is regularly sampled with $2000$ times-steps on the interval $[0, 1]$. This synthetic experiment is similar to others in recent work on time-series modelling \cite{li2019enhancing, yoon2019time, moreno2023deep}. We include an additional experiment in which we alter the signal in \eqref{eq:sin} so its frequency is $\omega_0$ for $t<t_0$ and $\omega_1$ afterward, where the task is to classify the sinusoid based on the first frequency. We call this dataset the ``long sinusoidal" dataset. This extension of the original experiment aims to test the ability of the model to perform long-range reasoning effectively. Note that for this task, we also add \texttt{ODE-RNN} \cite{rubanova2019latent} 
to the previously mentioned baselines.

\begin{wraptable}{R}{0.325\textwidth}
	\centering
	\caption{Test RMSE (mean $\pm$ std) computed across five seeds on the Heart Rate (HR) dataset. }\label{table:hr}
	\begin{center}
		\begin{footnotesize}
		    
			\begin{tabular}{@{}lc@{}}
				\toprule
				\multirow{2}{*}{Model} & \multicolumn{1}{c}{\textbf{HR}}  \\ \cmidrule(l){2-2} 
				& RMSE $\downarrow$ \\ \midrule   
				ODE-RNN$^\diamond$ & \aucscore{13.06}{0.00}  \\
                Neural-CDE$^\diamond$ & \aucscore{9.82}{0.34}  \\
                Neural-RDE$^\diamond$ & \aucscore{2.97}{0.45}  \\
                [0.1ex] \hdashline\noalign{\vskip 0.7ex}
                GRU$^\dagger$ & \aucscore{13.06}{0.00} \\
                ODE-RNN$^\dagger$ & \aucscore{13.06}{0.00}  \\ \noalign{\vskip 0.7ex}
        
                Neural-RDE$^\dagger$& \aucscore{\underline{4.04}}{0.11}  \\ \noalign{\vskip 0.7ex}
				Transformer & \aucscore{8.24}{2.24} \\
            ContiFormer & OOM  \\[0.1ex] \hdashline\noalign{\vskip 0.7ex}
				\textbf{RFormer}  & \bfaucscore{2.66}{0.21}  \\ \bottomrule
			\end{tabular}
		\end{footnotesize}
	\end{center}
 \vspace{-0.5cm}
\end{wraptable}

Figure \ref{fig:test_full_synthetic1} shows that the inclusion of both local and global information with the multi-view signature enhances the sample efficiency of the \texttt{RFormer} over the vanilla \texttt{Transformer} model, even though the attention mechanism is now operating on a much shorter sequence. When compared with other models, we see that \texttt{GRU} and \texttt{ODE-RNN} fail to capture the information in the signal, and are not able to obtain any meaningful performance improvement throughout the training period. This highlights the shortcomings of most RNN-based models when processing sequences of moderate length, which are very common in real-world applications. Both \texttt{Neural-CDE} and \texttt{Neural-RDE} capture some useful dependencies in the time series but fall short compared with both vanilla \texttt{Transformer} and \texttt{RFormer}.

\textbf{HR dataset.}\quad 
Next, we consider the Heart Rate dataset from the TSR archive \cite{tan2020monash}, originally sourced from Beth Israel Deaconess Medical Center (BIDMC). This dataset consists of time-series sampled from patient ECG readings, and each model is tasked to perform a regression by forecasting the patient's heart rate (HR) at the sample's conclusion. The data, sampled at 125Hz, consists of three-channel time-series (including time), each spanning $4000$ time steps.
We used the L2 loss metric to assess the performance. Table \ref{table:hr} shows the results, where  $\diamond$ denotes the results from \citet{morrill2021neural} and $\dagger$ our reproduction. The sequences in the HR dataset are sufficiently short to remain within memory when running the \texttt{Transformer} model. The baseline \texttt{Transformer} model improves over \texttt{GRU}, and \texttt{ODE-RNN}, however, it is less competitive when compared with \texttt{Neural-RDE}, suggesting that the Transformer is not particularly well-suited for this type of task. However, the \texttt{RFormer} model improves over the baseline \texttt{Transformer} by $67\%$. Across all tasks, we see significant improvements in efficiency as a consequence of the signature computation. We elaborate on this in more detail in the following subsection.

\textbf{Long Time Series Classification.} \quad
We now evaluate the performance of \texttt{RFormer} on five long time series classification tasks from the UEA time series classification archive \cite{bagnall2018uea}. A summary of these datasets is provided in Table \ref{table:datasets_summary} in Appendix \ref{app:datasets_details}. As previously done in \cite{morrill2021neural}, the original train and test datasets are merged and then randomly divided into new train, validation, and test sets, following a 70/15/15 split. The resulting performance metrics are summarized in Table \ref{tab:result1}.\footnote{We note that baseline results for this task were taken from \cite{walker2024log}.}

In this setting, we see that \texttt{RFormer} generally matches or slightly outperforms the continuous-time and SSM baselines. Due to the scaling problems of \texttt{ContiFormer} with respect to sequence length, we were unable to run this baseline within GPU memory constraints in most cases, and thus no results are reported (see Appendix \ref{subsec:additional_efficiency_experiments} for efficiency comparisons between models). In contrast, \texttt{RFormer} can cheaply train on the same device (see Section \ref{subsection:comp} for details) due to its ability to take advantage of the parallel nature of GPU processing and compress the original time series. This is especially noticeable when compared to continuous-time models (\texttt{Neural-CDE}, \texttt{Neural-RDE}, \texttt{LogCDE}), which are sometimes orders of magnitude slower than our model and consistently report lower or similar results. Additional experimental details can be found in Appendix \ref{app:add_experiments}, as well as some experiments on hyperparameter sensitivity. 


\setlength{\tabcolsep}{4pt}
\begin{table}[h]
\centering
\fontsize{7.5}{7.5}\selectfont
\caption{Classification performance on various long context temporal datasets from UCR TS archive.}
\label{tab:result1}
\adjustbox{center}{
\begin{tabular}{@{}lccccccccc@{}}
\toprule
{Dataset} & \textbf{LRU} & \textbf{S5} & \textbf{S6} & \textbf{Mamba} & \textbf{NCDE} & \textbf{NRDE} & \textbf{LogNCDE} & \textbf{Transformer} & \textbf{RFormer} \\ \midrule
SCP1 & 82.6 $\pm$ 3.4 & \textbf{89.9 $\pm$ 4.6} & 82.8 $\pm$ 2.7 & 80.7 $\pm$ 1.4 & 79.8 $\pm$ 5.6 & 80.9 $\pm$ 2.5 & 83.1 $\pm$ 2.8 & \underline{84.3 $\pm$ 6.3} & 81.2 $\pm$ 2.8 \\
SCP2 & 51.2 $\pm$ 3.6 & 50.5 $\pm$ 2.6 & 49.9 $\pm$ 9.5 & 48.2 $\pm$ 3.9 & \underline{53.0 $\pm$ 2.8} & \textbf{53.7 $\pm$ 6.9} & \textbf{53.7 $\pm$ 4.1} & 49.1 $\pm$ 2.5 & 52.3 $\pm$ 3.7 \\
MI & 48.4 $\pm$ 5.0 & 47.7 $\pm$ 5.5 & 51.3 $\pm$ 4.7 & 47.7 $\pm$ 4.5 & 49.5 $\pm$ 2.8 & 47.0 $\pm$ 5.7 & \underline{53.7 $\pm$ 5.3} & 50.5 $\pm$ 3.0 & \textbf{55.8 $\pm$ 6.6} \\
EW & \underline{87.8 $\pm$ 2.8} & 81.1 $\pm$ 3.7 & 85.0 $\pm$ 16.1 & 70.9 $\pm$ 15.8 & 75.0 $\pm$ 3.9 & 83.9 $\pm$ 7.3 & 85.6 $\pm$ 5.1 & OOM & \textbf{90.3 $\pm$ 0.1} \\
ETC & 21.5 $\pm$ 2.1 & 24.1 $\pm$ 4.3 & 26.4 $\pm$ 6.4 & 27.9 $\pm$ 4.5 & 29.9 $\pm$ 6.5 & 25.3 $\pm$ 1.8 & 34.4 $\pm$ 6.4 & \textbf{40.5 $\pm$ 6.3} & \underline{34.7 $\pm$ 4.1} \\
HB & \textbf{78.4 $\pm$ 6.7} & \underline{77.7 $\pm$ 5.5} & 76.5 $\pm$ 8.3 & 76.2 $\pm$ 3.8 & 73.9 $\pm$ 2.6 & 72.9 $\pm$ 4.8 & 75.2 $\pm$ 4.6 & 70.5 $\pm$ 0.1 & 72.5 $\pm$ 0.1 \\
\midrule
Av. & 61.7 & 61.8 & 62.0 & 58.6 & 60.2 & 60.6 & \underline{64.3} & 59.0 & \textbf{64.5} \\ 
\bottomrule
\end{tabular}}
\end{table}

\subsection{Training Efficiency} \label{subsection:comp}

Here, we focus on the computational gains of the model when compared with vanilla Transformers and methods that require numerical ODE solvers.

Attention-based architectures are highly parallelizable on modern GPUs, as opposed to traditional RNN models which require sequential updating. However, vanilla attention experiences a bottleneck in memory and time complexity as the sequence length $L$ grows. As covered above in Section \ref{sec:model}, variations of the signature transform allow the model to operate on a reduced sequence length $\bar{L}$ without increasing the dimensionality in a way that would become problematic for the model. This allows us to bypass the quadratic complexity of the model without resorting to sparsity techniques commonly used in the literature \cite{feng2023diffuser,li2019enhancing}. 

\begin{wraptable}{R}{0.4\textwidth}
\centering
    	\caption{Seconds per epoch for all models considered.}
     \label{table:summary1}
	\begin{center}
		\begin{footnotesize}
			\begin{tabular}{@{}lcccc@{}}
				\toprule
				\multirow{2}{*}{Model} & \multicolumn{4}{c}{Sec. / Epoch}  \\ \cmidrule(l){2-4} 
				& \textbf{Sine} & \textbf{EW}  & \textbf{HR}\\ \midrule
                GRU & \textbf{0.12} & \underline{0.25}  & \underline{1.07} \\
				ODE-RNN & 5.39  & 48.59 &  50.71\\
                    Neural-CDE & 9.83  & -  & - \\
				Neural-RDE & 0.85 & 5.23 &  9.52 \\
				Transformer & 0.77
 & OOM & 11.71 \\[0.1ex] \hdashline\noalign{\vskip 0.7ex}
				\textbf{RFormer}  & \underline{0.55} & \textbf{0.11}  &  \textbf{0.45 }\\
                    \textbf{Speedup} & $1.4\times$ & -  & $26.11\times$  \\
                \bottomrule
			\end{tabular}
		\end{footnotesize}
	\end{center}
\end{wraptable}

Tables \ref{table:hr}-\ref{table:summary1} show that \texttt{RFormer} is competitive when modelling datasets with extremely long sequences  without an explosion in the memory requirements. \texttt{RFormer} exploits the parallelism of the attention mechanism to significantly accelerate training time, as the length of the input sequence is decreased substantially. In particular, we observe speedups of ${\mathbf{1.4\times}}$ to ${\mathbf{26.11\times}}$ with respect to standard attention, and higher when compared with all methods requiring numerical solutions to ODEs. The computational efficiency gains of \texttt{RFormer} are attained due to the signature transform reducing the length of the time-series with minimal information loss. The effectiveness of this transformation can be seen from the ablation study carried out in Appendix \ref{app:multiview}. This contrasts with NRDEs \cite{morrill2021neural}, which augment NCDEs 
with local signatures of input data, and find that smaller windows often perform better. Furthermore, NRDEs do not experience the same computational gains as \texttt{RFormer} because they must perform many costly ODE integration steps.

\begin{wraptable}{L}{0.38\textwidth}
    \centering
    \caption{Dataset processing times for training, validation, and testing phases.}\label{tab:sig_times}
    \begin{center}
        \begin{small}
            \begin{tabular}{@{}lccc@{}}
                \toprule
                Dataset & \textbf{Train} & \textbf{Val} & \textbf{Test} \\ 
                \midrule
                Eigenworms & 1.11 s. & 0.19 s. & 0.19 s. \\
                HR & 4.23 s. & 0.84 s. & 0.85 s. \\
                Sine (1k) & 0.39 s. & 0.39 s. & 0.39 s. \\
                Sine (5k) & 0.51 s. & 0.51 s. & 0.51 s. \\
                Sine (20k) & 1.64 s. & 1.64 s. & 1.64 s. \\
                Sine (100k) & 5.74 s. & 5.74 s. & 5.74 s. \\
                \bottomrule
            \end{tabular}
        \end{small}
    \end{center}
\end{wraptable}

In Figure \ref{fig:test_drop_1}, we showcase the improvements in computational efficiency of \texttt{RFormer} compared to vanilla Transformers \cite{vaswani2017attention}, continuous-time Transformers \cite{chen2023contiformer}, and other continuous-time RNNs \cite{kidger2020neural, morrill2021neural} when processing sequences from $L=100$ samples up to $L=10\text{K}$. 
As seen, \texttt{RFormer} is significantly more efficient than its continuous-time and vanilla counterparts, even when performing the signature computation online, which involves computing the signatures for each batch during training, resulting in significant redundant computation. When signatures are precomputed just once before training, the computational time of each epoch remains \textit{constant} across input all sequence lengths including $L = 10\text{K}$ (see the exact signature computation times for different datasets in Table \ref{tab:sig_times}). We also stress the fact that \texttt{RFormer} also scales gracefully for extremely long sequences (up to $L = 250\text{K}$) with both online and offline computation of the signatures, as shown in Appendix \ref{app:add_experiments}. Finally,  we highlight that \texttt{ContiFormer} has a sample complexity of $\mathcal{O}(L^2 d^2 S)$, where $S$ represents the normalized number of function evaluations of the numerical ODE solver, which makes \texttt{ContiFormer} orders of magnitude more computationally intensive when compared to \texttt{RFormer} and prevents the model from running on sequences longer than 500 points without running out of memory (see device details in Appendix \ref{app:exp_details}).


\vspace{-0.25cm}

\subsection{Irregular Time Series Classification}\label{sec: random dropping}

\vspace{-0.25cm}

So far, we mainly focused on the efficiency and inductive bias afforded to the model through the use of signatures. However, a key element of \texttt{RFormer} is that it can naturally deal with irregularly sampled sequences without expensive numerical ODE solvers. This property follows from the fact that signatures are \textit{invariant to time reparameterization}, see Proposition \ref{prop:invariance}. In this subsection, we empirically test this property by training the model on the same datasets but randomly dropping a percentage of the data points. This test intends to find if the model is able to learn continuous-time representations of the original input time-series. 
The results can be found in Table \ref{table:drop}. We find that \texttt{RFormer} consistently results in the best performance, with a small performance drop when compared to the full dataset. Importantly, this property is achieved in conjunction with the efficiency gains afforded to the model and without the use of expensive numerical ODE solvers.\footnote{We use our own reproduction to test the performance of all models in irregularly sampled datasets. Random dropping requires window sizes larger than 2 because signatures cannot be computed over a single point. The best step size was chosen in accordance with performance on the validation dataset, see Appendix \ref{app:hyper}.} Finally, we perform an additional set of experiments on the 15 univariate classification datasets from the UEA time series classification archive and compare our model with recent state-of-the-art models for irregular time series \cite{ohstable, schirmer2022modeling}. Across the board, we find that our model is both faster and more accurate than the continuous-time benchmark \textit{despite having a discrete-time Transformer backbone}, as shown in Figure \ref{fig:random_UEA}, which introduces Continuous Recurrent Units (CRU) \cite{schirmer2022modeling} as an extra baseline. For more details and more exhaustive experimentation on random data drops, see Appendix \ref{app:add_experiments}.

\vspace{-0.4cm}

 \begin{table}[H]
	\centering
	\caption{Performance of all models under a random 50\% drop in datapoints per epoch.}

 \label{table:drop}
	\begin{center}
		\begin{small}
\begin{tabular}{@{}lcccc@{}}
				\toprule
				\multirow{2}{*}{Model} & \multicolumn{4}{c}{\textbf{50\% Drop Performance}}  \\ \cmidrule(l){2-5} 
				& EW (\%) $\uparrow$ & HR  $\downarrow$ & Sine (\%) $\uparrow$ & Sine Long (\%) $\uparrow$ \\ \midrule     
                GRU & 35.90  & 13.06 & 0.96 & 1.16 \\
                ODE-RNN & 37.61 & 13.06  & 1.06 & 1.23 \\
                Neural-RDE& \underline{60.68}  & \underline{4.67}  & 0.94 & 0.87 \\ \noalign{\vskip 0.7ex}
				Transformer & OOM & 12.73 & \underline{7.37} & \underline{20.23} \\[0.1ex] \hdashline\noalign{\vskip 0.7ex}
				\textbf{RFormer}  & \textbf{87.69} & \textbf{2.96}  & \textbf{59.57} & \textbf{93.17} \\ \bottomrule
			\end{tabular}
		\end{small}
	\end{center}
\end{table}

\vspace{-1cm}

\section{Reasons for improved model performance}

\vspace{-0.25cm}

In this final section, we provide explanations for the superior inductive bias of the \texttt{RFormer} model compared to its vanilla Transformer counterpart, despite its lower computational cost.

\vspace{-0.3cm}

\subsection{Spatial Processing}

\vspace{-0.2cm}

First, we highlight that a key reason the model achieves significant compression benefits in the tasks considered is its ability to \textit{jointly} account for temporal and spatial interactions through the self-attention mechanism and signature terms, respectively. In particular, we believe that for certain datasets, the relationships between different channels of the time series may hold more importance than the temporal information itself, which can often be redundant. This is exemplified in the \texttt{Eigenworms} dataset, which experiences a ~20\% performance drop when employing univariate signatures, but is able to achieve state-of-the-art performance with a 600$\times$ compression rate in the temporal dimension when signatures are applied across all channels, as shown in Figure 
\ref{fig:coarse_subfigures}.
To this end, we draw parallels between the use of signatures and the field of temporal graph processing, where the use of the signature over all channels can be seen as a fully connected graph, capturing information from all channels, and the univariate signature would correspond to a graph with only self-connections between the nodes, as depicted in Figure \ref{fig:spatial_subfigures}. In our view, this hints towards the idea of using sparse graph learning techniques \cite{cini2024taming, de2023projections} to reduce the explosion in signature terms while retaining the ability to perform effective spatial processing.

\begin{wrapfigure}{R}{7cm}
    \centering
	\includegraphics[width=0.95\linewidth]{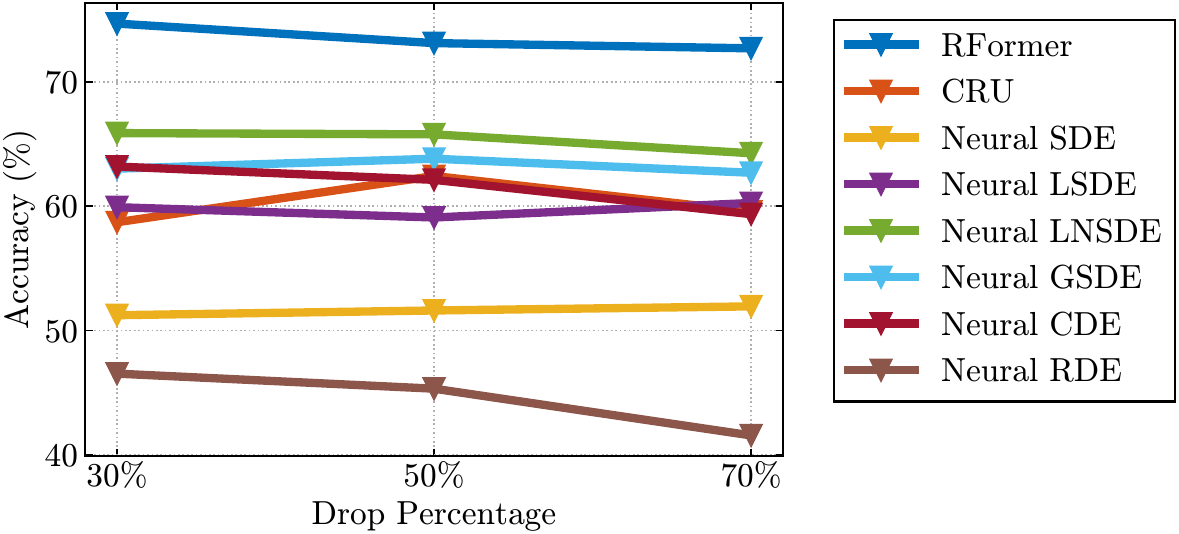}
	\caption{Average performance of all models on the 15 univariate datasets from the UEA Time Series archive under different degrees of data drop.}
	\label{fig:random_UEA}
\end{wrapfigure}

To empirically test these claims, we design a synthetic experiment using a 2-channel time series. Each channel contains a signal of the form $\sin(\omega_i t + \nu_i), i={1,2}$, where $\omega_i$ and $\nu_i$ are randomly sampled from the interval $[0, 2\pi]$.  For half of the dataset, the last 1\% of temporal samples in the second channel are set to match the frequency of the first channel. The task is to classify whether the samples in this final interval are of the same frequency. As shown in Figure \ref{fig:spatial_subfigures}, \texttt{RFormer} demonstrates greater sample efficiency and achieves higher test accuracy compared to its vanilla Transformer counterpart, highlighting the effectiveness of signatures in spatial processing.

\begin{figure}[H]
    \centering
    \begin{subfigure}[b]{0.37\textwidth}
        \centering
        \includegraphics[width=0.32\linewidth]{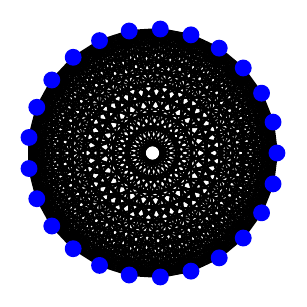} 
        \includegraphics[width=0.32\linewidth]{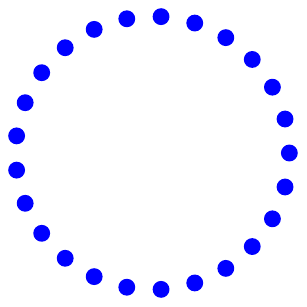} 
        \includegraphics[width=0.32\linewidth]{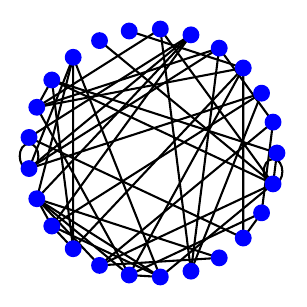}
        \vspace{0.5cm}
        \label{fig:test_full_synthetic3}
    \end{subfigure}
    \begin{subfigure}[b]{0.27\textwidth}
        \centering
        \centering
        \includegraphics[width=\linewidth, height=2.75cm]{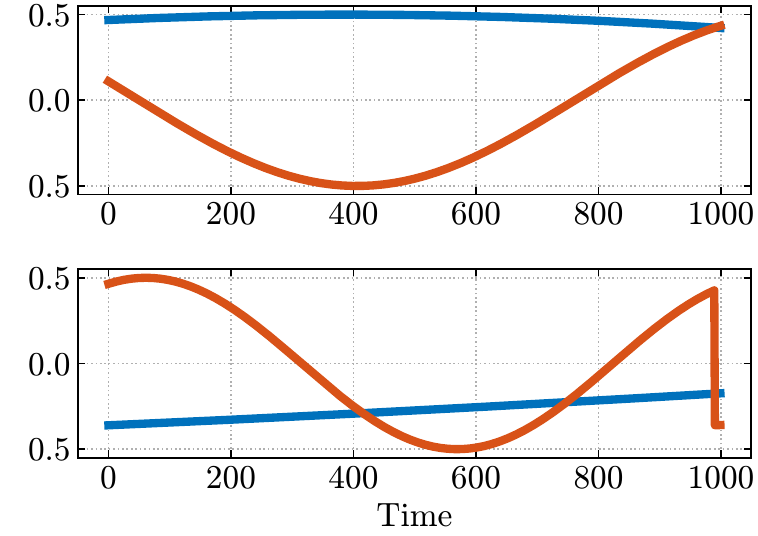}
    \end{subfigure}
    \begin{subfigure}[b]{0.27\textwidth}
        \centering
        \includegraphics[width=\linewidth, height=2.75cm]{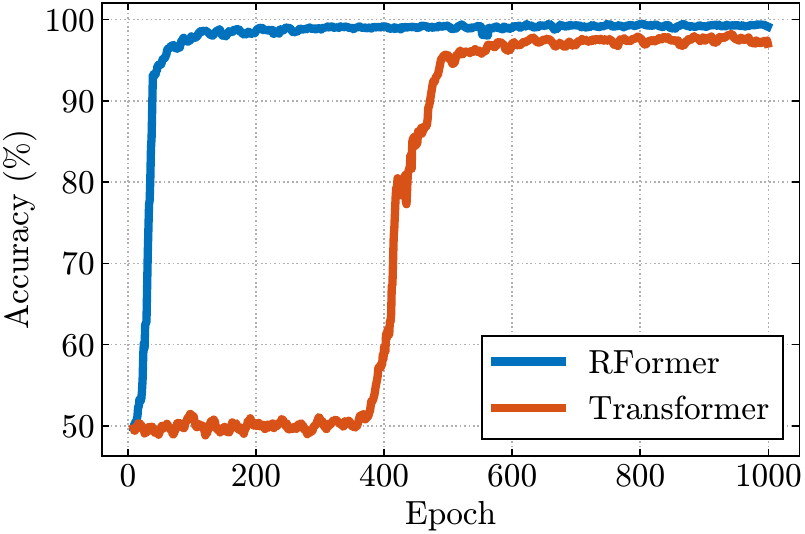}
    \end{subfigure}
    \caption{\textbf{Left:} Graph connectivity structures for multivariate, univariate and sparse signature. \textbf{Middle:} Example samples for synthetic task. \textbf{Right:} Performance on spatial synthetic experiment.}
    \label{fig:spatial_subfigures}
\end{figure}

\vspace{-0.6cm}

\subsection{Sequence Coarsening as an Inductive Bias for Transformers}

\vspace{-0.1cm}

In addition to the benefits of higher-order signature terms, we empirically observe that even using level-one signature terms resulted in performance improvements when compared to processing sequences without any transformation. We believe that the reduction in input signal length, achieved without significant information loss through the signature transform is another important factor in the improved inductive bias of \texttt{RFormer}. This finding aligns with the concurrent work of \citep{barbero2024transformers}, which highlights some of the drawbacks of decoder-only Transformers for long sequences in terms of both \textit{oversquashing} and \textit{representational collapse}.

\begin{figure}[H]
    \centering
    \begin{subfigure}[b]{0.45 \textwidth}
        \centering
        \centering
        \includegraphics[height=2cm, width=0.49\linewidth]{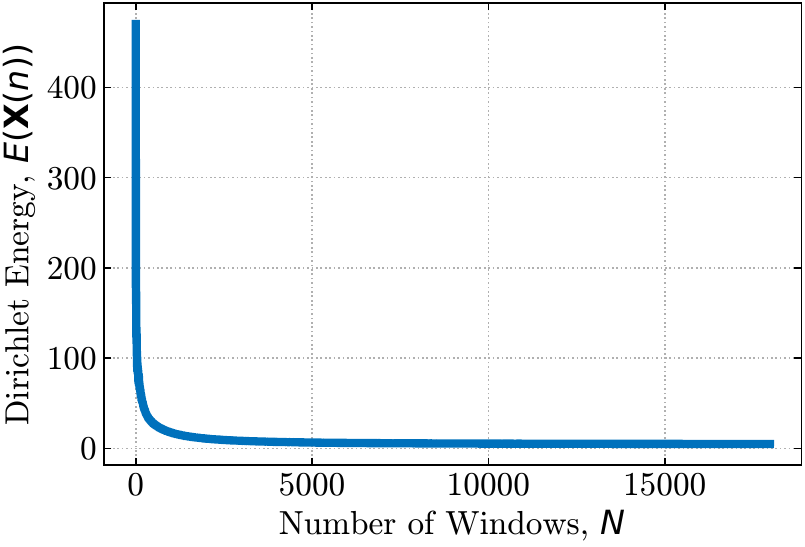} 
        \includegraphics[height=2cm, width=0.49\linewidth]{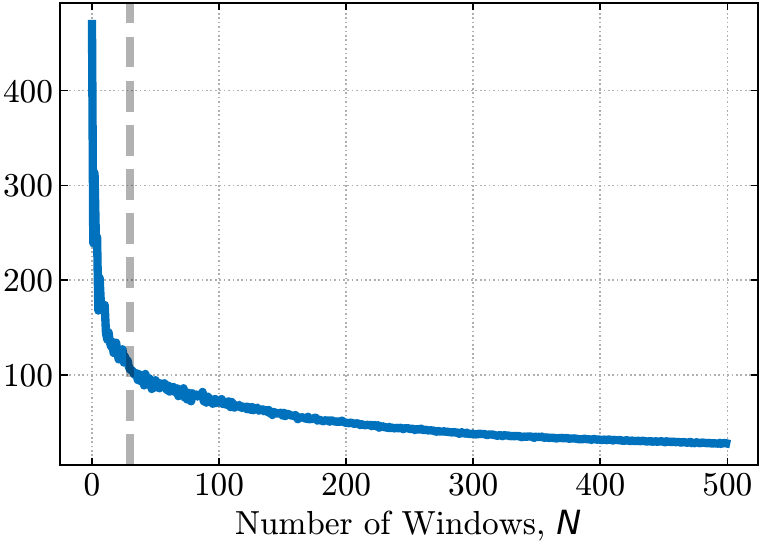} 
        \vspace{0.05cm}
    \end{subfigure}
    \hfill
    \begin{subfigure}[b]{0.49\textwidth}
        \centering
        \includegraphics[height=2cm]{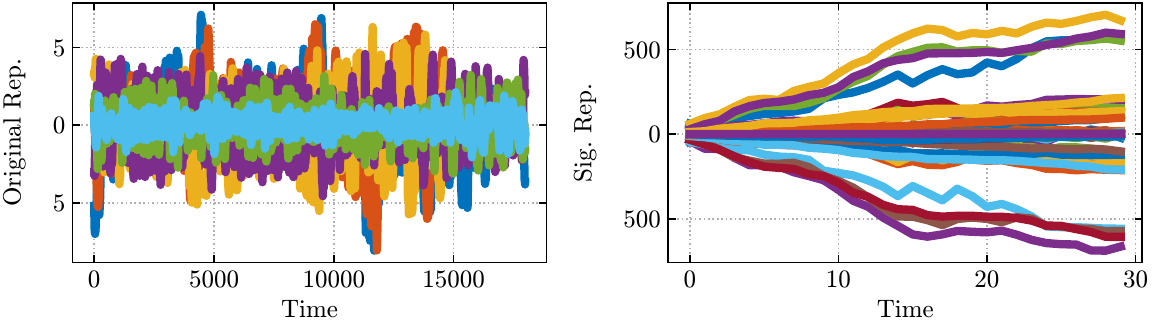}
        \label{fig:eig}
    \end{subfigure}
    \vspace{-0.25cm}
    \caption{\textbf{Left:} Dirichlet energy as a function of window size for the Eigenworms dataset. \textbf{Right:} Original and hidden representation after signature layer for two examples in the EW dataset.}
    \label{fig:coarse_subfigures}
\end{figure}

\vspace{-0.5cm}

To measure the degree of coarsening in the sequence, we find that interpreting the temporal sequence as a path graph and using ideas from the oversmoothing literature \cite{rusch2023survey} serves as a good way to measure the similarity of the representations being fed to the Transformer. In particular, we compute the Dirichlet Energy \cite{rusch2022graph}, defined in this case as $E(\mathbf{X})=\frac{1}{N}\sum_{i=1}^{N}||\mathbf{X}_i - \mathbf{X}_{i-1}||_{2}$ of the temporal sequence resulting from taking increasing window sizes of the global signature. An example of this is shown in Figure \ref{fig:coarse_subfigures} for the \texttt{Eigenworms} dataset, where we compared different numbers of windows (from 2 to ~18k). Interestingly, we found that the "elbow" of the Dirichlet energy corresponded to 30 windows in this dataset, which we found empirically to be one of the most performant settings. This hints at the idea of the Dirichlet energy being used for signature hyperparameter tuning as well.

\vspace{-0.35cm}

\section{Conclusion}

\vspace{-0.25cm}

We introduced the Rough Transformer, a variant of the original Transformer that allows the processing of discrete-time series as continuous-time signals through the use of multi-view signature attention. Empirical comparisons showed that Rough Transformers outperform vanilla Transformers and continuous-time models on a variety of time-series tasks and are robust to the sampling rate of the signal. Finally, we showed that RFormer provides significant speedups in training time compared to regular attention and ODE-based methods, without the need for major architectural modifications or sparsity constraints. 



\section*{Impact Statement}
This work is unlikely to result in any harmful societal repercussions. Its primary potential lies in its ability to enhance and advance existing data modelling and machine learning methods.

\vspace{-0.1cm}

\section*{Acknowledgements}

\vspace{-0.1cm}

We thank Christopher Salvi, Antonio Orvieto, Yannick Limmer, and Benjamin Walker for discussions at different stages of the project. AA acknowledges support from the Rafael Del Pino Foundation and a G-Research travel grant. XD acknowledges support from the Oxford-Man Institute of Quantitative Finance.

\newpage

\bibliographystyle{abbrvnat}
\bibliography{neurips_2024}

\begin{thebibliography}{91}
\providecommand{\natexlab}[1]{#1}
\providecommand{\url}[1]{\texttt{#1}}
\expandafter\ifx\csname urlstyle\endcsname\relax
  \providecommand{\doi}[1]{doi: #1}\else
  \providecommand{\doi}{doi: \begingroup \urlstyle{rm}\Url}\fi

\bibitem[Arjovsky et~al.(2016)Arjovsky, Shah, and Bengio]{arjovsky2016}
M.~Arjovsky, A.~Shah, and Y.~Bengio.
\newblock Unitary evolution recurrent neural networks.
\newblock In \emph{International conference on machine learning}, pages 1120--1128. PMLR, 2016.

\bibitem[Arribas(2018)]{arribas2018derivatives}
I.~P. Arribas.
\newblock Derivatives pricing using signature payoffs.
\newblock \emph{arXiv preprint arXiv:1809.09466}, 2018.

\bibitem[Bagnall et~al.(2018)Bagnall, Dau, Lines, Flynn, Large, Bostrom, Southam, and Keogh]{bagnall2018uea}
A.~Bagnall, H.~A. Dau, J.~Lines, M.~Flynn, J.~Large, A.~Bostrom, P.~Southam, and E.~Keogh.
\newblock The uea multivariate time series classification archive, 2018.
\newblock \emph{arXiv preprint arXiv:1811.00075}, 2018.

\bibitem[Barbero et~al.(2024)Barbero, Banino, Kapturowski, Kumaran, Ara{\'u}jo, Vitvitskyi, Pascanu, and Veli{\v{c}}kovi{\'c}]{barbero2024transformers}
F.~Barbero, A.~Banino, S.~Kapturowski, D.~Kumaran, J.~G. Ara{\'u}jo, A.~Vitvitskyi, R.~Pascanu, and P.~Veli{\v{c}}kovi{\'c}.
\newblock Transformers need glasses! information over-squashing in language tasks.
\newblock \emph{arXiv preprint arXiv:2406.04267}, 2024.

\bibitem[Beltagy et~al.(2020)Beltagy, Peters, and Cohan]{beltagy2020longformer}
I.~Beltagy, M.~E. Peters, and A.~Cohan.
\newblock Longformer: The long-document transformer.
\newblock \emph{arXiv preprint arXiv:2004.05150}, 2020.

\bibitem[Bilo\v{s} et~al.(2021)Bilo\v{s}, Sommer, Rangapuram, Januschowski, and G\"{u}nnemann]{bilos2021neural}
M.~Bilo\v{s}, J.~Sommer, S.~S. Rangapuram, T.~Januschowski, and S.~G\"{u}nnemann.
\newblock Neural flows: Efficient alternative to neural odes.
\newblock In M.~Ranzato, A.~Beygelzimer, Y.~Dauphin, P.~Liang, and J.~W. Vaughan, editors, \emph{Advances in Neural Information Processing Systems}, volume~34, pages 21325--21337. Curran Associates, Inc., 2021.
\newblock URL \url{https://proceedings.neurips.cc/paper_files/paper/2021/file/b21f9f98829dea9a48fd8aaddc1f159d-Paper.pdf}.

\bibitem[Calvo-Ordonez et~al.(2023)Calvo-Ordonez, Huang, Zhang, Yang, Schonlieb, and Aviles-Rivero]{calvo2023beyond}
S.~Calvo-Ordonez, J.~Huang, L.~Zhang, G.~Yang, C.-B. Schonlieb, and A.~I. Aviles-Rivero.
\newblock Beyond u: Making diffusion models faster \& lighter.
\newblock \emph{arXiv preprint arXiv:2310.20092}, 2023.

\bibitem[Cartea et~al.(2015)Cartea, Jaimungal, and Penalva]{cartea2015algorithmic}
{\'A}.~Cartea, S.~Jaimungal, and J.~Penalva.
\newblock \emph{Algorithmic and high-frequency trading}.
\newblock Cambridge University Press, 2015.

\bibitem[Cartea et~al.(2023)Cartea, Duran-Martin, and S{\'a}nchez-Betancourt]{cartea2023detecting}
{\'A}.~Cartea, G.~Duran-Martin, and L.~S{\'a}nchez-Betancourt.
\newblock Detecting toxic flow.
\newblock \emph{arXiv preprint arXiv:2312.05827}, 2023.

\bibitem[Chang et~al.(2018)Chang, Chen, Haber, and Chi]{chang2019}
B.~Chang, M.~Chen, E.~Haber, and E.~H. Chi.
\newblock Antisymmetricrnn: A dynamical system view on recurrent neural networks.
\newblock In \emph{International Conference on Learning Representations}, 2018.

\bibitem[Chang et~al.(2023)Chang, Dur{\`a}n-Mart{\'\i}n, Shestopaloff, Jones, and Murphy]{chang2023low}
P.~Chang, G.~Dur{\`a}n-Mart{\'\i}n, A.~Y. Shestopaloff, M.~Jones, and K.~Murphy.
\newblock Low-rank extended kalman filtering for online learning of neural networks from streaming data.
\newblock \emph{arXiv preprint arXiv:2305.19535}, 2023.

\bibitem[Chen et~al.(2018)Chen, Rubanova, Bettencourt, and Duvenaud]{chen2018neural}
R.~T. Chen, Y.~Rubanova, J.~Bettencourt, and D.~K. Duvenaud.
\newblock Neural ordinary differential equations.
\newblock \emph{Advances in neural information processing systems}, 31, 2018.

\bibitem[Chen et~al.(2023)Chen, Ren, Wang, Fang, Sun, and Li]{chen2023contiformer}
Y.~Chen, K.~Ren, Y.~Wang, Y.~Fang, W.~Sun, and D.~Li.
\newblock Contiformer: Continuous-time transformer for irregular time series modeling.
\newblock In \emph{Thirty-seventh Conference on Neural Information Processing Systems}, 2023.

\bibitem[Child et~al.(2019)Child, Gray, Radford, and Sutskever]{child2019generating}
R.~Child, S.~Gray, A.~Radford, and I.~Sutskever.
\newblock Generating long sequences with sparse transformers.
\newblock \emph{arXiv preprint arXiv:1904.10509}, 2019.

\bibitem[Cho et~al.(2014)Cho, Van~Merri{\"e}nboer, Gulcehre, Bahdanau, Bougares, Schwenk, and Bengio]{cho2014learning}
K.~Cho, B.~Van~Merri{\"e}nboer, C.~Gulcehre, D.~Bahdanau, F.~Bougares, H.~Schwenk, and Y.~Bengio.
\newblock Learning phrase representations using rnn encoder-decoder for statistical machine translation.
\newblock \emph{arXiv preprint arXiv:1406.1078}, 2014.

\bibitem[Choromanski et~al.(2020)Choromanski, Likhosherstov, Dohan, Song, Gane, Sarlos, Hawkins, Davis, Mohiuddin, Kaiser, et~al.]{choromanski2020rethinking}
K.~M. Choromanski, V.~Likhosherstov, D.~Dohan, X.~Song, A.~Gane, T.~Sarlos, P.~Hawkins, J.~Q. Davis, A.~Mohiuddin, L.~Kaiser, et~al.
\newblock Rethinking attention with performers.
\newblock In \emph{International Conference on Learning Representations}, 2020.

\bibitem[Cini et~al.(2024)Cini, Marisca, Zambon, and Alippi]{cini2024taming}
A.~Cini, I.~Marisca, D.~Zambon, and C.~Alippi.
\newblock Taming local effects in graph-based spatiotemporal forecasting.
\newblock \emph{Advances in Neural Information Processing Systems}, 36, 2024.

\bibitem[Cirone et~al.(2024)Cirone, Orvieto, Walker, Salvi, and Lyons]{cirone2024theoretical}
N.~M. Cirone, A.~Orvieto, B.~Walker, C.~Salvi, and T.~Lyons.
\newblock Theoretical foundations of deep selective state-space models.
\newblock \emph{arXiv preprint arXiv:2402.19047}, 2024.

\bibitem[Compagnoni et~al.(2023)Compagnoni, Scampicchio, Biggio, Orvieto, Hofmann, and Teichmann]{compagnoni2023effectiveness}
E.~M. Compagnoni, A.~Scampicchio, L.~Biggio, A.~Orvieto, T.~Hofmann, and J.~Teichmann.
\newblock On the effectiveness of randomized signatures as reservoir for learning rough dynamics.
\newblock In \emph{2023 International Joint Conference on Neural Networks (IJCNN)}, pages 1--8. IEEE, 2023.

\bibitem[Corsi(2009)]{corsi2009simple}
F.~Corsi.
\newblock A simple approximate long-memory model of realized volatility.
\newblock \emph{Journal of Financial Econometrics}, 7\penalty0 (2):\penalty0 174--196, 2009.

\bibitem[Cuchiero et~al.(2021)Cuchiero, Gonon, Grigoryeva, Ortega, and Teichmann]{cuchiero2021discrete}
C.~Cuchiero, L.~Gonon, L.~Grigoryeva, J.-P. Ortega, and J.~Teichmann.
\newblock Discrete-time signatures and randomness in reservoir computing.
\newblock \emph{IEEE Transactions on Neural Networks and Learning Systems}, 33\penalty0 (11):\penalty0 6321--6330, 2021.

\bibitem[Dao et~al.(2022)Dao, Fu, Ermon, Rudra, and R{\'e}]{dao2022flashattention}
T.~Dao, D.~Fu, S.~Ermon, A.~Rudra, and C.~R{\'e}.
\newblock Flashattention: Fast and memory-efficient exact attention with io-awareness.
\newblock \emph{Advances in Neural Information Processing Systems}, 35:\penalty0 16344--16359, 2022.

\bibitem[de~Oc{\'a}riz~Borde et~al.(2023)de~Oc{\'a}riz~Borde, Arroyo, and Posner]{de2023projections}
H.~S. de~Oc{\'a}riz~Borde, A.~Arroyo, and I.~Posner.
\newblock Projections of model spaces for latent graph inference.
\newblock In \emph{ICLR 2023 Workshop on Physics for Machine Learning}, 2023.

\bibitem[Dupont et~al.(2019)Dupont, Doucet, and Teh]{dupont2019augmented}
E.~Dupont, A.~Doucet, and Y.~W. Teh.
\newblock Augmented neural odes.
\newblock \emph{Advances in neural information processing systems}, 32, 2019.

\bibitem[Erichson et~al.(2020)Erichson, Azencot, Queiruga, Hodgkinson, and Mahoney]{erichson2021}
N.~B. Erichson, O.~Azencot, A.~Queiruga, L.~Hodgkinson, and M.~W. Mahoney.
\newblock Lipschitz recurrent neural networks.
\newblock In \emph{International Conference on Learning Representations}, 2020.

\bibitem[Feng et~al.(2023)Feng, Li, Jiang, and Ying]{feng2023diffuser}
A.~Feng, I.~Li, Y.~Jiang, and R.~Ying.
\newblock Diffuser: efficient transformers with multi-hop attention diffusion for long sequences.
\newblock In \emph{Proceedings of the AAAI Conference on Artificial Intelligence}, volume~37, pages 12772--12780, 2023.

\bibitem[Fermanian(2021)]{fermanian2021embedding}
A.~Fermanian.
\newblock Embedding and learning with signatures.
\newblock \emph{Computational Statistics \& Data Analysis}, 157:\penalty0 107148, 2021.

\bibitem[Fleming et~al.(2018)Fleming, Sheldon, Fagan, Leimgruber, Mueller, Nandintsetseg, Noonan, Olson, Setyawan, Sianipar, et~al.]{fleming2018correcting}
C.~Fleming, D.~Sheldon, W.~Fagan, P.~Leimgruber, T.~Mueller, D.~Nandintsetseg, M.~Noonan, K.~Olson, E.~Setyawan, A.~Sianipar, et~al.
\newblock Correcting for missing and irregular data in home-range estimation.
\newblock \emph{Ecological Applications}, 28\penalty0 (4):\penalty0 1003--1010, 2018.

\bibitem[Fons et~al.(2022)Fons, Sztrajman, El-Laham, Iosifidis, and Vyetrenko]{fons2022hypertime}
E.~Fons, A.~Sztrajman, Y.~El-Laham, A.~Iosifidis, and S.~Vyetrenko.
\newblock Hypertime: Implicit neural representation for time series.
\newblock \emph{arXiv preprint arXiv:2208.05836}, 2022.

\bibitem[Funahashi and Nakamura(1993)]{funahashi1993approximation}
K.-i. Funahashi and Y.~Nakamura.
\newblock Approximation of dynamical systems by continuous time recurrent neural networks.
\newblock \emph{Neural networks}, 6\penalty0 (6):\penalty0 801--806, 1993.

\bibitem[Gu and Dao(2023)]{gu2023}
A.~Gu and T.~Dao.
\newblock Mamba: Linear-time sequence modeling with selective state spaces.
\newblock \emph{arXiv preprint arXiv:2312.00752}, 2023.

\bibitem[Gu et~al.(2021)Gu, Goel, and Re]{gu2021}
A.~Gu, K.~Goel, and C.~Re.
\newblock Efficiently modeling long sequences with structured state spaces.
\newblock In \emph{International Conference on Learning Representations}, 2021.

\bibitem[Hambly and Lyons(2010)]{hambly2010uniqueness}
B.~Hambly and T.~Lyons.
\newblock Uniqueness for the signature of a path of bounded variation and the reduced path group.
\newblock \emph{Annals of Mathematics}, pages 109--167, 2010.

\bibitem[Hasani et~al.(2021)Hasani, Lechner, Amini, Rus, and Grosu]{hasani2021liquid}
R.~Hasani, M.~Lechner, A.~Amini, D.~Rus, and R.~Grosu.
\newblock Liquid time-constant networks.
\newblock In \emph{Proceedings of the AAAI Conference on Artificial Intelligence}, volume~35, pages 7657--7666, 2021.

\bibitem[Hausdorff and Peng(1996)]{hausdorff1996multiscaled}
J.~M. Hausdorff and C.-K. Peng.
\newblock Multiscaled randomness: A possible source of 1/f noise in biology.
\newblock \emph{Physical review E}, 54\penalty0 (2):\penalty0 2154, 1996.

\bibitem[Hautsch(2004)]{hautsch2004modelling}
N.~Hautsch.
\newblock \emph{Modelling irregularly spaced financial data: theory and practice of dynamic duration models}.
\newblock Springer Science \& Business Media, 2004.

\bibitem[He et~al.(2016)He, Zhang, Ren, and Sun]{he2016deep}
K.~He, X.~Zhang, S.~Ren, and J.~Sun.
\newblock Deep residual learning for image recognition.
\newblock In \emph{Proceedings of the IEEE conference on computer vision and pattern recognition}, pages 770--778, 2016.

\bibitem[Henaff et~al.(2016)Henaff, Szlam, and LeCun]{Henaff2016}
M.~Henaff, A.~Szlam, and Y.~LeCun.
\newblock Recurrent orthogonal networks and long-memory tasks.
\newblock In \emph{International Conference on Machine Learning}, pages 2034--2042. PMLR, 2016.

\bibitem[H{\"o}glund et~al.(2023)H{\"o}glund, Ferrucci, Hern{\'a}ndez, Gonzalez, Salvi, S{\'a}nchez-Betancourt, and Zhang]{hoglund2023neural}
M.~H{\"o}glund, E.~Ferrucci, C.~Hern{\'a}ndez, A.~M. Gonzalez, C.~Salvi, L.~S{\'a}nchez-Betancourt, and Y.~Zhang.
\newblock A neural rde approach for continuous-time non-markovian stochastic control problems.
\newblock In \emph{ICML Workshop on New Frontiers in Learning, Control, and Dynamical Systems}, 2023.

\bibitem[Holt et~al.(2022)Holt, Qian, and van~der Schaar]{holt2022neural}
S.~I. Holt, Z.~Qian, and M.~van~der Schaar.
\newblock Neural laplace: Learning diverse classes of differential equations in the laplace domain.
\newblock In \emph{International Conference on Machine Learning}, pages 8811--8832. PMLR, 2022.

\bibitem[Katharopoulos et~al.(2020)Katharopoulos, Vyas, Pappas, and Fleuret]{katharopoulos2020transformers}
A.~Katharopoulos, A.~Vyas, N.~Pappas, and F.~Fleuret.
\newblock Transformers are rnns: Fast autoregressive transformers with linear attention.
\newblock In \emph{International conference on machine learning}, pages 5156--5165. PMLR, 2020.

\bibitem[Keller et~al.(2023)Keller, Muller, Sejnowski, and Welling]{keller2023}
T.~A. Keller, L.~Muller, T.~Sejnowski, and M.~Welling.
\newblock Traveling waves encode the recent past and enhance sequence learning.
\newblock In \emph{The Twelfth International Conference on Learning Representations}, 2023.

\bibitem[Kidger and Lyons(2020)]{kidger2020signatory}
P.~Kidger and T.~Lyons.
\newblock Signatory: differentiable computations of the signature and logsignature transforms, on both cpu and gpu.
\newblock \emph{arXiv preprint arXiv:2001.00706}, 2020.

\bibitem[Kidger et~al.(2019)Kidger, Bonnier, Perez~Arribas, Salvi, and Lyons]{kidger2018deep}
P.~Kidger, P.~Bonnier, I.~Perez~Arribas, C.~Salvi, and T.~Lyons.
\newblock Deep signature transforms.
\newblock In H.~Wallach, H.~Larochelle, A.~Beygelzimer, F.~d\textquotesingle Alch\'{e}-Buc, E.~Fox, and R.~Garnett, editors, \emph{Advances in Neural Information Processing Systems}, volume~32. Curran Associates, Inc., 2019.
\newblock URL \url{https://proceedings.neurips.cc/paper_files/paper/2019/file/d2cdf047a6674cef251d56544a3cf029-Paper.pdf}.

\bibitem[Kidger et~al.(2020)Kidger, Morrill, Foster, and Lyons]{kidger2020neural}
P.~Kidger, J.~Morrill, J.~Foster, and T.~Lyons.
\newblock Neural controlled differential equations for irregular time series.
\newblock \emph{Advances in Neural Information Processing Systems}, 33:\penalty0 6696--6707, 2020.

\bibitem[Lechner and Hasani(2020)]{lechner2020learning}
M.~Lechner and R.~Hasani.
\newblock Learning long-term dependencies in irregularly-sampled time series.
\newblock \emph{arXiv preprint arXiv:2006.04418}, 2020.

\bibitem[Lemercier et~al.(2021)Lemercier, Salvi, Cass, Bonilla, Damoulas, and Lyons]{lemercier2021siggpde}
M.~Lemercier, C.~Salvi, T.~Cass, E.~V. Bonilla, T.~Damoulas, and T.~J. Lyons.
\newblock Siggpde: Scaling sparse gaussian processes on sequential data.
\newblock In \emph{International Conference on Machine Learning}, pages 6233--6242. PMLR, 2021.

\bibitem[Lezcano-Casado and Mart{\i}nez-Rubio(2019)]{lezcano2019}
M.~Lezcano-Casado and D.~Mart{\i}nez-Rubio.
\newblock Cheap orthogonal constraints in neural networks: A simple parametrization of the orthogonal and unitary group.
\newblock In \emph{International Conference on Machine Learning}, pages 3794--3803. PMLR, 2019.

\bibitem[Li et~al.(2019)Li, Jin, Xuan, Zhou, Chen, Wang, and Yan]{li2019enhancing}
S.~Li, X.~Jin, Y.~Xuan, X.~Zhou, W.~Chen, Y.-X. Wang, and X.~Yan.
\newblock Enhancing the locality and breaking the memory bottleneck of transformer on time series forecasting.
\newblock \emph{Advances in neural information processing systems}, 32, 2019.

\bibitem[Li et~al.(2020)Li, Kovachki, Azizzadenesheli, Bhattacharya, Stuart, Anandkumar, et~al.]{li2020fourier}
Z.~Li, N.~B. Kovachki, K.~Azizzadenesheli, K.~Bhattacharya, A.~Stuart, A.~Anandkumar, et~al.
\newblock Fourier neural operator for parametric partial differential equations.
\newblock In \emph{International Conference on Learning Representations}, 2020.

\bibitem[Lyons et~al.(2007)Lyons, Caruana, and L{\'e}vy]{lyons2007differential}
T.~J. Lyons, M.~Caruana, and T.~L{\'e}vy.
\newblock \emph{Differential equations driven by rough paths}.
\newblock Springer, 2007.

\bibitem[Melnychuk et~al.(2022)Melnychuk, Frauen, and Feuerriegel]{melnychuk2022causal}
V.~Melnychuk, D.~Frauen, and S.~Feuerriegel.
\newblock Causal transformer for estimating counterfactual outcomes.
\newblock In \emph{International Conference on Machine Learning}, pages 15293--15329. PMLR, 2022.

\bibitem[Morariu-Patrichi and Pakkanen(2022)]{morariu2022state}
M.~Morariu-Patrichi and M.~S. Pakkanen.
\newblock State-dependent hawkes processes and their application to limit order book modelling.
\newblock \emph{Quantitative Finance}, 22\penalty0 (3):\penalty0 563--583, 2022.

\bibitem[Moreno-Pino and Zohren(2022)]{moreno2022deepvol}
F.~Moreno-Pino and S.~Zohren.
\newblock Deepvol: Volatility forecasting from high-frequency data with dilated causal convolutions.
\newblock \emph{arXiv preprint arXiv:2210.04797}, 2022.

\bibitem[Moreno-Pino et~al.(2023)Moreno-Pino, Olmos, and Art{\'e}s-Rodr{\'\i}guez]{moreno2023deep}
F.~Moreno-Pino, P.~M. Olmos, and A.~Art{\'e}s-Rodr{\'\i}guez.
\newblock Deep autoregressive models with spectral attention.
\newblock \emph{Pattern Recognition}, 133:\penalty0 109014, 2023.

\bibitem[Moreno-Pino et~al.(2024)Moreno-Pino, Arroyo, Waldon, Dong, and Cartea]{moreno2024rough}
F.~Moreno-Pino, {\'A}.~Arroyo, H.~Waldon, X.~Dong, and {\'A}.~Cartea.
\newblock Rough transformers for continuous and efficient time-series modelling.
\newblock \emph{arXiv preprint arXiv:2403.10288}, 2024.

\bibitem[Morrill et~al.(2021)Morrill, Salvi, Kidger, and Foster]{morrill2021neural}
J.~Morrill, C.~Salvi, P.~Kidger, and J.~Foster.
\newblock Neural rough differential equations for long time series.
\newblock In \emph{International Conference on Machine Learning}, pages 7829--7838. PMLR, 2021.

\bibitem[Nguyen and Grover(2022)]{nguyen2022transformer}
T.~Nguyen and A.~Grover.
\newblock Transformer neural processes: Uncertainty-aware meta learning via sequence modeling.
\newblock In \emph{International Conference on Machine Learning}, pages 16569--16594. PMLR, 2022.

\bibitem[Norcliffe et~al.(2020{\natexlab{a}})Norcliffe, Bodnar, Day, Moss, and Li{\`o}]{norcliffe2020neural}
A.~Norcliffe, C.~Bodnar, B.~Day, J.~Moss, and P.~Li{\`o}.
\newblock Neural ode processes.
\newblock In \emph{International Conference on Learning Representations}, 2020{\natexlab{a}}.

\bibitem[Norcliffe et~al.(2020{\natexlab{b}})Norcliffe, Bodnar, Day, Simidjievski, and Li{\`o}]{norcliffe2020second}
A.~Norcliffe, C.~Bodnar, B.~Day, N.~Simidjievski, and P.~Li{\`o}.
\newblock On second order behaviour in augmented neural odes.
\newblock \emph{Advances in neural information processing systems}, 33:\penalty0 5911--5921, 2020{\natexlab{b}}.

\bibitem[Oh et~al.()Oh, Lim, and Kim]{ohstable}
Y.~Oh, D.~Lim, and S.~Kim.
\newblock Stable neural stochastic differential equations in analyzing irregular time series data.
\newblock In \emph{The Twelfth International Conference on Learning Representations}.

\bibitem[Orvieto et~al.(2023)Orvieto, Smith, Gu, Fernando, Gulcehre, Pascanu, and De]{orvieto2023}
A.~Orvieto, S.~L. Smith, A.~Gu, A.~Fernando, C.~Gulcehre, R.~Pascanu, and S.~De.
\newblock Resurrecting recurrent neural networks for long sequences.
\newblock In \emph{International Conference on Machine Learning}, pages 26670--26698. PMLR, 2023.

\bibitem[Park et~al.(2023)Park, Choi, Yoon, Kang, et~al.]{park2023learning}
Y.~Park, J.~Choi, C.~Yoon, M.~Kang, et~al.
\newblock Learning pde solution operator for continuous modeling of time-series.
\newblock \emph{arXiv preprint arXiv:2302.00854}, 2023.

\bibitem[Perez~Arribas et~al.(2018)Perez~Arribas, Goodwin, Geddes, Lyons, and Saunders]{perez2018signature}
I.~Perez~Arribas, G.~M. Goodwin, J.~R. Geddes, T.~Lyons, and K.~E. Saunders.
\newblock A signature-based machine learning model for distinguishing bipolar disorder and borderline personality disorder.
\newblock \emph{Translational psychiatry}, 8\penalty0 (1):\penalty0 274, 2018.

\bibitem[Perveen et~al.(2020)Perveen, Shahbaz, Saba, Keshavjee, Rehman, and Guergachi]{perveen2020handling}
S.~Perveen, M.~Shahbaz, T.~Saba, K.~Keshavjee, A.~Rehman, and A.~Guergachi.
\newblock Handling irregularly sampled longitudinal data and prognostic modeling of diabetes using machine learning technique.
\newblock \emph{IEEE Access}, 8:\penalty0 21875--21885, 2020.

\bibitem[Ratcliff et~al.(2016)Ratcliff, Smith, Brown, and McKoon]{ratcliff2016diffusion}
R.~Ratcliff, P.~L. Smith, S.~D. Brown, and G.~McKoon.
\newblock Diffusion decision model: Current issues and history.
\newblock \emph{Trends in cognitive sciences}, 20\penalty0 (4):\penalty0 260--281, 2016.

\bibitem[Reizenstein(2017)]{reizenstein2017calculation}
J.~Reizenstein.
\newblock Calculation of iterated-integral signatures and log signatures.
\newblock \emph{arXiv preprint arXiv:1712.02757}, 2017.

\bibitem[Reizenstein and Graham(2018)]{reizenstein2018iisignature}
J.~Reizenstein and B.~Graham.
\newblock The iisignature library: efficient calculation of iterated-integral signatures and log signatures.
\newblock \emph{arXiv preprint arXiv:1802.08252}, 2018.

\bibitem[Romero et~al.()Romero, Kuzina, Bekkers, Tomczak, and Hoogendoorn]{romerockconv}
D.~W. Romero, A.~Kuzina, E.~J. Bekkers, J.~M. Tomczak, and M.~Hoogendoorn.
\newblock Ckconv: Continuous kernel convolution for sequential data.
\newblock In \emph{International Conference on Learning Representations}.

\bibitem[Rubanova et~al.(2019)Rubanova, Chen, and Duvenaud]{rubanova2019latent}
Y.~Rubanova, R.~T. Chen, and D.~K. Duvenaud.
\newblock Latent ordinary differential equations for irregularly-sampled time series.
\newblock \emph{Advances in neural information processing systems}, 32, 2019.

\bibitem[Rusch and Mishra(2020)]{rusch2021a}
T.~K. Rusch and S.~Mishra.
\newblock Coupled oscillatory recurrent neural network (cornn): An accurate and (gradient) stable architecture for learning long time dependencies.
\newblock In \emph{International Conference on Learning Representations}, 2020.

\bibitem[Rusch and Mishra(2021)]{rusch2021b}
T.~K. Rusch and S.~Mishra.
\newblock Unicornn: A recurrent model for learning very long time dependencies.
\newblock In \emph{International Conference on Machine Learning}, pages 9168--9178. PMLR, 2021.

\bibitem[Rusch et~al.(2021)Rusch, Mishra, Erichson, and Mahoney]{rusch2022}
T.~K. Rusch, S.~Mishra, N.~B. Erichson, and M.~W. Mahoney.
\newblock Long expressive memory for sequence modeling.
\newblock In \emph{International Conference on Learning Representations}, 2021.

\bibitem[Rusch et~al.(2022)Rusch, Chamberlain, Rowbottom, Mishra, and Bronstein]{rusch2022graph}
T.~K. Rusch, B.~Chamberlain, J.~Rowbottom, S.~Mishra, and M.~Bronstein.
\newblock Graph-coupled oscillator networks.
\newblock In \emph{International Conference on Machine Learning}, pages 18888--18909. PMLR, 2022.

\bibitem[Rusch et~al.(2023)Rusch, Bronstein, and Mishra]{rusch2023survey}
T.~K. Rusch, M.~M. Bronstein, and S.~Mishra.
\newblock A survey on oversmoothing in graph neural networks.
\newblock \emph{arXiv preprint arXiv:2303.10993}, 2023.

\bibitem[S{\'a}ez~de Oc{\'a}riz~Borde et~al.(2024)S{\'a}ez~de Oc{\'a}riz~Borde, Arroyo, Morales, Posner, and Dong]{saez2024neural}
H.~S{\'a}ez~de Oc{\'a}riz~Borde, A.~Arroyo, I.~Morales, I.~Posner, and X.~Dong.
\newblock Neural latent geometry search: product manifold inference via gromov-hausdorff-informed bayesian optimization.
\newblock \emph{Advances in Neural Information Processing Systems}, 36, 2024.

\bibitem[Salvi et~al.(2021)Salvi, Lemercier, Liu, Horvath, Damoulas, and Lyons]{salvi2021higher}
C.~Salvi, M.~Lemercier, C.~Liu, B.~Horvath, T.~Damoulas, and T.~Lyons.
\newblock Higher order kernel mean embeddings to capture filtrations of stochastic processes.
\newblock \emph{Advances in Neural Information Processing Systems}, 34:\penalty0 16635--16647, 2021.

\bibitem[Schirmer et~al.(2022)Schirmer, Eltayeb, Lessmann, and Rudolph]{schirmer2022modeling}
M.~Schirmer, M.~Eltayeb, S.~Lessmann, and M.~Rudolph.
\newblock Modeling irregular time series with continuous recurrent units.
\newblock In \emph{International conference on machine learning}, pages 19388--19405. PMLR, 2022.

\bibitem[Seedat et~al.(2022)Seedat, Imrie, Bellot, Qian, and van~der Schaar]{seedat2022continuous}
N.~Seedat, F.~Imrie, A.~Bellot, Z.~Qian, and M.~van~der Schaar.
\newblock Continuous-time modeling of counterfactual outcomes using neural controlled differential equations.
\newblock In \emph{International Conference on Machine Learning}, pages 19497--19521. PMLR, 2022.

\bibitem[Sitzmann et~al.(2020)Sitzmann, Martel, Bergman, Lindell, and Wetzstein]{sitzmann2020implicit}
V.~Sitzmann, J.~Martel, A.~Bergman, D.~Lindell, and G.~Wetzstein.
\newblock Implicit neural representations with periodic activation functions.
\newblock \emph{Advances in neural information processing systems}, 33:\penalty0 7462--7473, 2020.

\bibitem[Smith et~al.()Smith, Warrington, and Linderman]{smithsimplified}
J.~T. Smith, A.~Warrington, and S.~Linderman.
\newblock Simplified state space layers for sequence modeling.
\newblock In \emph{The Eleventh International Conference on Learning Representations}.

\bibitem[Tallec and Ollivier(2018)]{tallec2018}
C.~Tallec and Y.~Ollivier.
\newblock Can recurrent neural networks warp time?
\newblock \emph{arXiv preprint arXiv:1804.11188}, 2018.

\bibitem[Tan et~al.(2020)Tan, Bergmeir, Petitjean, and Webb]{tan2020monash}
C.~W. Tan, C.~Bergmeir, F.~Petitjean, and G.~I. Webb.
\newblock Monash university, uea, ucr time series extrinsic regression archive.
\newblock \emph{arXiv preprint arXiv:2006.10996}, 2020.

\bibitem[Tong et~al.(2023)Tong, Nguyen-Tang, Lee, Tran, and Choi]{tong2023sigformer}
A.~Tong, T.~Nguyen-Tang, D.~Lee, T.~M. Tran, and J.~Choi.
\newblock Sigformer: Signature transformers for deep hedging.
\newblock In \emph{Proceedings of the Fourth ACM International Conference on AI in Finance}, pages 124--132, 2023.

\bibitem[Vahid et~al.(2020)Vahid, M{\"u}ckschel, Stober, Stock, and Beste]{vahid2020applying}
A.~Vahid, M.~M{\"u}ckschel, S.~Stober, A.-K. Stock, and C.~Beste.
\newblock Applying deep learning to single-trial eeg data provides evidence for complementary theories on action control.
\newblock \emph{Communications biology}, 3\penalty0 (1):\penalty0 112, 2020.

\bibitem[Vaswani et~al.(2017)Vaswani, Shazeer, Parmar, Uszkoreit, Jones, Gomez, Kaiser, and Polosukhin]{vaswani2017attention}
A.~Vaswani, N.~Shazeer, N.~Parmar, J.~Uszkoreit, L.~Jones, A.~N. Gomez, {\L}.~Kaiser, and I.~Polosukhin.
\newblock Attention is all you need.
\newblock \emph{Advances in neural information processing systems}, 30, 2017.

\bibitem[Walker et~al.(2024)Walker, McLeod, Qin, Cheng, Li, and Lyons]{walker2024log}
B.~Walker, A.~D. McLeod, T.~Qin, Y.~Cheng, H.~Li, and T.~Lyons.
\newblock Log neural controlled differential equations: The lie brackets make a difference.
\newblock \emph{arXiv preprint arXiv:2402.18512}, 2024.

\bibitem[Wang et~al.(2020)Wang, Li, Khabsa, Fang, and Ma]{wang2020linformer}
S.~Wang, B.~Z. Li, M.~Khabsa, H.~Fang, and H.~Ma.
\newblock Linformer: Self-attention with linear complexity.
\newblock \emph{arXiv preprint arXiv:2006.04768}, 2020.

\bibitem[Yoon et~al.(2019)Yoon, Jarrett, and Van~der Schaar]{yoon2019time}
J.~Yoon, D.~Jarrett, and M.~Van~der Schaar.
\newblock Time-series generative adversarial networks.
\newblock \emph{Advances in neural information processing systems}, 32, 2019.

\bibitem[Zaheer et~al.(2020)Zaheer, Guruganesh, Dubey, Ainslie, Alberti, Ontanon, Pham, Ravula, Wang, Yang, et~al.]{zaheer2020big}
M.~Zaheer, G.~Guruganesh, K.~A. Dubey, J.~Ainslie, C.~Alberti, S.~Ontanon, P.~Pham, A.~Ravula, Q.~Wang, L.~Yang, et~al.
\newblock Big bird: Transformers for longer sequences.
\newblock \emph{Advances in neural information processing systems}, 33:\penalty0 17283--17297, 2020.

\bibitem[Zeng et~al.(2023)Zeng, Chen, Zhang, and Xu]{zeng2023transformers}
A.~Zeng, M.~Chen, L.~Zhang, and Q.~Xu.
\newblock Are transformers effective for time series forecasting?
\newblock In \emph{Proceedings of the AAAI conference on artificial intelligence}, volume~37, pages 11121--11128, 2023.

\end{thebibliography}

\newpage


\appendix

\section{Properties of Path Signatures}
\label{app:sig}

First, we recall that the path is uniquely determined by its signature, which motivates its use as a feature map. 
\begin{proposition}
    Given a path $\widehat{X}: [0, T] \rightarrow \R^d$, then the map $P: [0, T] \rightarrow \R^{1+d}$ where $P(t)= (t, \widehat{X}(t))$ is uniquely determined by it's signature $S(P)_{0, T}$.
\end{proposition}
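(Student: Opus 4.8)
The statement is the classical uniqueness theorem for the signature, specialised to the time-augmented path; the role of the extra time coordinate is precisely to remove the two sources of non-uniqueness that the signature otherwise carries, namely invariance under reparameterisation and invariance under tree-like cancellation. The plan is therefore to reduce to the fundamental uniqueness theorem and then show that the strictly increasing first coordinate of $P$ collapses both ambiguities.

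First I would record that, by the reparameterisation invariance \eqref{eq:invariance}, $S(P)_{0,T}$ can at best determine $P$ up to reparameterisation, i.e.\ it determines the oriented image curve $\Gamma = \{(t,\widehat{X}(t)) : t \in [0,T]\}$. This loss is harmless here: since the first coordinate of $P$ is $t \mapsto t$, it is strictly increasing, so each value $c \in [0,T]$ of the first coordinate is attained at exactly one point of $\Gamma$, namely $(c,\widehat{X}(c))$. Hence recovering $\Gamma$ as an oriented curve is equivalent to recovering $P$ exactly, because reading off the unique point over each first-coordinate value reconstructs the parameterisation $c \mapsto \widehat{X}(c)$.

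Next I would invoke the uniqueness theorem for signatures of bounded-variation paths (see \citet{lyons2007differential}): two paths share a signature if and only if they are tree-like equivalent, equivalently $S(P_1)_{0,T} = S(P_2)_{0,T}$ iff the loop $P_1 \ast \overleftarrow{P_2}$ is tree-like. So it suffices to show that for time-augmented $P_1,P_2$ this loop being tree-like forces $\widehat{X}_1 = \widehat{X}_2$. Writing $Q = P_1 \ast \overleftarrow{P_2}$ on $[0,2T]$, its first coordinate is the tent map $Q^0(t) = T - |t-T|$, and the time-reflected pair satisfies $Q(t_0) = (t_0,\widehat{X}_1(t_0))$ and $Q(2T-t_0) = (t_0,\widehat{X}_2(t_0))$. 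Feeding a tree-like height function $h$ (with $h(0)=h(2T)=0$) into the defining inequality $\|Q(t)-Q(s)\| \le h(s)+h(t)-2\inf_{[s,t]} h$ at the endpoints $s=0$ and $s=2T$ yields $h \ge Q^0$, while evaluating it at the reflected pair gives $\|\widehat{X}_1(t_0)-\widehat{X}_2(t_0)\| \le h(t_0)+h(2T-t_0)-2\inf_{[t_0,2T-t_0]} h$.

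The main obstacle is the final step: forcing that right-hand side to vanish. This is exactly where the monotonicity of the time coordinate is essential, since the first coordinate never revisits a value, no sub-arc of $P$ can be retraced, so $Q$ cannot contain the backtracking that a nontrivial tree-like loop requires. Concretely, I would argue that no sub-interval $[a,b]$ with $a<b$ can have $P|_{[a,b]}$ tree-like: otherwise the endpoint inequality would give $P(a)=P(b)$ and in particular $a=b$, a contradiction. Thus $P$ is tree-reduced, its reparameterisation class is recovered uniquely, and the reflected-pair estimate collapses to $\widehat{X}_1(t_0)=\widehat{X}_2(t_0)$ for every $t_0$. Making the passage from ``no tree-like sub-interval'' to the exact cancellation fully rigorous, ideally by working with the minimal tree-metric height function rather than an arbitrary $h$, is the only delicate point; everything else is bookkeeping.
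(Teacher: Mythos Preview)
The paper does not give its own proof of this proposition; it simply states ``The proof can be found in \citet{hambly2010uniqueness}.'' Your proposal is therefore not really comparable to a paper proof, but it is the correct unpacking of why that citation suffices: you invoke the Hambly--Lyons uniqueness theorem (signature determines a bounded-variation path up to tree-like equivalence) and then use the strictly monotone time coordinate to kill both the reparameterisation ambiguity and the tree-like ambiguity. That is exactly the standard argument, and the paper is implicitly relying on it.

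Two small remarks. First, the uniqueness-up-to-tree-like-equivalence result is due to Hambly and Lyons (2010), not to \citet{lyons2007differential}; the St~Flour notes predate the full theorem, so your citation should match the paper's. Second, the ``delicate point'' you flag can be disposed of more cheaply than via height functions: since the first coordinate of $P$ is strictly increasing, $P$ is injective, so for any $a<b$ the increment $P(b)-P(a)$ has nonzero first component, hence $S(P|_{[a,b]})$ has nonzero level-one term and the sub-path cannot be tree-like. This already shows $P$ is tree-reduced, and then Hambly--Lyons gives that the signature determines $P$ up to reparameterisation; the monotone time coordinate pins down the parameterisation as you say. With that simplification your sketch is complete.
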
 
The proof can be found 
 in \citet{hambly2010uniqueness}. 

For Rough Transformers, several features of path signatures are important. First, linear functionals on path signatures possess universal approximation properties for continuous functionals.
\begin{theorem}\label{thm:universal approx}
	Fix $T>0$, and let $K \subset C^1_b([0, T]; \R^d)$. Let $f : K \rightarrow \R$ be continuous with respect to the sup-norm topology on $C^1_b([0, T]; \R^d)$. Then for any $\epsilon>0$, there exists a linear functional $\ell$ such that
	\begin{align}
		| f(\overline{X}) - \langle \ell, S(\overline{X})_{0, T}\rangle| \leq \epsilon\,,
	\end{align}
	for any $\widehat{X}\in K$, where $\overline{X}$ denotes the time-added augmentation of $\widehat{X}$.  
\end{theorem}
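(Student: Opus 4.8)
The plan is to prove this via the Stone--Weierstrass theorem, applied to the algebra of \emph{signature-linear functionals}. Throughout I would assume (as is standard for results of this type, and as is required for Stone--Weierstrass to apply) that $K$ is compact in the sup-norm topology, and I would work with the time-added paths $\overline{X}$, whose signatures live in the extended tensor algebra. The goal is to show that the collection
\begin{align}
    \mathcal{A} = \left\{ \overline{X} \mapsto \langle \ell, S(\overline{X})_{0, T}\rangle \,:\, \ell \text{ a linear functional} \right\}
\end{align}
is dense in $C(K; \R)$ in the sup-norm, from which the claim follows immediately since $f \in C(K;\R)$.

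First I would check that $\mathcal{A} \subseteq C(K; \R)$. Since each $\ell$ is a finite linear combination of coordinate projections (words), $\langle \ell, S(\overline{X})_{0,T}\rangle$ is a finite sum of iterated integrals of $\overline{X}$, each continuous in the $C^1_b$ topology, so continuity of the map is immediate. Next I would verify that $\mathcal{A}$ is a subalgebra of $C(K;\R)$. It is a linear subspace because the linear functionals form a vector space and $\ell \mapsto \langle \ell, \cdot \rangle$ is linear; it contains the constants because the functional extracting the level-$0$ term gives $\langle \ell, S(\overline{X})_{0,T}\rangle = S^0 = 1$ for every path.

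The key algebraic step, and the one I expect to be the main obstacle, is that $\mathcal{A}$ is closed under pointwise multiplication. This is precisely the \emph{shuffle identity}: for any two linear functionals $\ell_1, \ell_2$ there is a linear functional $\ell_1 \shuffle \ell_2$ with
\begin{align}
    \langle \ell_1, S(\overline{X})_{0,T}\rangle \cdot \langle \ell_2, S(\overline{X})_{0,T}\rangle = \langle \ell_1 \shuffle \ell_2, S(\overline{X})_{0,T}\rangle
\end{align}
for all $\overline{X}$. This follows from the product-of-iterated-integrals formula, which expands a product of two iterated integrals as a finite sum of higher-order iterated integrals indexed by the shuffles of the two underlying multi-indices; establishing this expansion rigorously (and checking it respects the grading/truncation of the signature) is the technical crux of the argument.

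Finally I would verify that $\mathcal{A}$ separates points, which is exactly where the time augmentation is essential. Given distinct time-added paths $\overline{X} \neq \overline{Y}$, the uniqueness of the signature for time-augmented paths (the Proposition immediately preceding this theorem) gives $S(\overline{X})_{0,T} \neq S(\overline{Y})_{0,T}$, so some coordinate, and hence some linear functional $\ell$, distinguishes them; without the time channel the signature is unique only up to tree-like equivalence and separation could fail. With $\mathcal{A}$ established as a point-separating subalgebra of $C(K;\R)$ containing the constants, the Stone--Weierstrass theorem yields density in the sup-norm, which is precisely the asserted $\epsilon$-approximation and completes the proof.
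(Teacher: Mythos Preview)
The paper does not actually supply its own proof of this theorem; immediately after the statement it writes ``For a proof of \ref{thm:universal approx}, see \citet{arribas2018derivatives}.'' Your Stone--Weierstrass argument is precisely the standard route taken in that reference (and in the broader literature, e.g.\ \cite{lyons2007differential}), so there is nothing to compare: your proposal matches the intended proof. Your explicit remark that compactness of $K$ is needed for Stone--Weierstrass is a useful clarification, since the paper's hypothesis as stated omits it.
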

For a proof of \ref{thm:universal approx}, see \citet{arribas2018derivatives}. Even though Theorem \ref{thm:universal approx} guarantees that \textit{linear} functionals are sufficient for universal approximation, linear models are not always sufficient in practice. This motivates the development of nonlinear models built upon the path signature which efficiently extract path behavior.

The second feature is that the terms of the path signature decay factorially, as described by the following proposition.
\begin{proposition}\label{prop:decay}
	Given $\widehat{X} \in C^1_b([0, T]; \R^d)$, for any $s, t \in [0, T]$, we have that for any $I \in \mathcal{I}^n_d$
	\begin{align}
		|S(\widehat{X})_{0, T}^I| = O\left(1/n!\right)\,.
	\end{align}
\end{proposition}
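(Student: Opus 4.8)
The plan is to bound the signature coefficient directly from its integral definition, exploiting two facts: the integrand is bounded because $\widehat{X}\in C^1_b$, and the domain of integration is a simplex of small volume. First I would fix a multi-index $I = (i_1, \dots, i_n) \in \mathcal{I}_d^n$ and write out
\begin{align}
S(\widehat{X})_{0,T}^I = \int_{0 < u_1 < \cdots < u_n < T} \dot{\widehat{X}}^{i_1}(u_1) \cdots \dot{\widehat{X}}^{i_n}(u_n)\, du_1 \cdots du_n\,.
\end{align}
Since $\widehat{X} \in C^1_b([0,T]; \R^d)$, its derivative is bounded, so I set $M := \sup_{u \in [0,T]} \max_{1\le j \le d} |\dot{\widehat{X}}^{j}(u)| < \infty$. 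Then each factor obeys $|\dot{\widehat{X}}^{i_\ell}(u_\ell)| \le M$, and the modulus of the integrand is at most $M^n$ uniformly on the simplex.

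The second step is to compute the volume of the integration domain. The region $\{0 < u_1 < \cdots < u_n < T\}$ is the standard ordered $n$-simplex on $[0,T]$, whose Lebesgue measure is exactly $T^n / n!$. Combining this with the uniform integrand bound yields
\begin{align}
|S(\widehat{X})_{0,T}^I| \le M^n \cdot \frac{T^n}{n!} = \frac{(MT)^n}{n!}\,.
\end{align}
Because $M$ and $T$ are fixed constants independent of $n$, the factorial in the denominator dominates the exponential numerator, so $|S(\widehat{X})_{0,T}^I| = O(1/n!)$ as claimed; the identical argument on any subinterval $[s,t]$ replaces $T$ by $(t-s)$.

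There is no genuine difficulty in this estimate. The only point requiring care, which I would flag explicitly, is the interpretation of the $O(1/n!)$ notation: strictly the bound is $(MT)^n/n!$ rather than $C/n!$ for a single constant $C$, so the statement should be read as capturing the asymptotic decay rate governed by the factorial, which is precisely what justifies convergence of the signature series and truncation in practice. For completeness I would remark that the sharp, reparameterization-consistent form replaces $M^n T^n$ by the $n$th power of the $1$-variation $\|\widehat{X}\|_{1\text{-var};[0,T]} = \int_0^T \|\dot{\widehat{X}}(u)\|\, du$, recovering the standard rough-path estimate $\|S(\widehat{X})^n_{0,T}\| \le \|\widehat{X}\|_{1\text{-var};[0,T]}^n / n!$.
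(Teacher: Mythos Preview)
Your argument is correct and is exactly the standard proof: bound the integrand on the ordered simplex by $M^n$ via $C^1_b$, then use that the simplex has volume $T^n/n!$. The paper does not give its own proof but simply cites \cite{lyons2007differential}, and what you have written is precisely the classical estimate found there; your remark on reading $O(1/n!)$ as shorthand for $(MT)^n/n!$-type decay, and the sharper $1$-variation formulation, are both appropriate clarifications.
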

For a proof of Proposition \ref{prop:decay}, see \cite{lyons2007differential}. Hence, the number of terms in the signature grows exponentially in the level of the signature, but the tail of the signature is well-behaved, so only a few levels in a truncated signature are necessary to adequately approximate continuous functionals.

\subsection{Signatures of Piecewise Linear Paths.}
In the Rough Transformer, we use linear interpolation of input time-series to get a continuous-time representation of the data. As mentioned in Section \ref{sec:model}, the signature computation in this case is particularly simple. 

Suppose $\widehat{X}_k: [t_k, t_{k+1}] \rightarrow \R^d$ is a linear interpolation between two points $X_k, X_{k+1} \in \R^d$. That is,
\begin{align}
    \widehat{X}_k(t) = X_{k} + \frac{t - t_k}{t_{k+1} - t_k}\left(X_{k+1} - X_k\right)\,.
\end{align}
Then the signature of $\widehat{X}_k$ is given explicitly by
\begin{align}
    S(\widehat{X}_k)_{t_k, t_{k+1}} = \left(1, X_{k+1} - X_k, \frac{1}{2} (X_{k+1} - X_k)^{\otimes 2}, \frac{1}{3!}(X_{k+1} - X_k)^{\otimes 3}, ..., \frac{1}{n!}(X_{k+1} - X_k)^{\otimes n}, ...\right)\,,
    \label{eq:sig}
\end{align}
where $\otimes$ denotes the tensor product. Let $\widehat{X}_k * \widehat{X}_{k+1}$ denote the \textit{concatenation} of $\widehat{X}_k$ and $\widehat{X}_{k+1}$. That is, $\widehat{X}_k * \widehat{X}_{k+1} : [t_k, t_{k+2}] \rightarrow \R^d$ is given by
\begin{align}
    \widehat{X}_k * \widehat{X}_{k+1}(t) = \begin{cases}
        \widehat{X}_k(t) & t\in [t_k, t_{k+1}] \\
        \widehat{X}_{k+1}(t) & t\in (t_2, t_{k+2}]\,.
    \end{cases}
\end{align}
The signature of the concatenation $\widehat{X}_k * \widehat{X}_{k+1}$ is given by \textit{Chen's relation}, whose proof is in \cite{lyons2007differential}. To state this result, we first note that $S(\widehat{X})^n_{s, t}$ can be interpreted as an element of the \textit{extended tensor algebra} of $\R^d$:
\begin{align}
\label{eq:inv}
    T((\R^d)) = \left\{(a_0, ..., a_n, ...) : a_n \in \R^{d \otimes n}\right\}\,.
\end{align}

\begin{proposition}[Chen's Relation]
The following identity holds:
    \begin{align}
        S(\widehat{X}_k * \widehat{X}_{k+1})_{t_{k}, t_{k+2}} = S(\widehat{X}_k)_{t_{k}, t_{k+1}} \otimes S(\widehat{X}_{k+1})_{t_{k+1}, t_{k+2}}\,,
    \end{align}
    where for elements $A, B \in T((\R^d))$ with $A = (A_0, A_1, A_2, ... )$ and $B = (B_0, B_1, B_2, ...)$ the tensor product $\otimes$ is defined
    \begin{align}
        A \otimes B = \left(\sum_{j = 0}^k A_j \otimes B_{k - j}\right)_{k\geq0}\,.
    \end{align}
\end{proposition}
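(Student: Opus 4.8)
The plan is to verify the identity level by level, since two elements of the extended tensor algebra $T((\R^d))$ agree if and only if all of their homogeneous components agree. So I would fix an arbitrary level $m \in \N$ and a multi-index $I = (i_1, \ldots, i_m) \in \mathcal{I}_d^m$, and compute the corresponding iterated-integral coordinate of $S(\widehat{X}_k * \widehat{X}_{k+1})_{t_k, t_{k+2}}$ directly from the definition of the signature.

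First I would write out the level-$m$ coordinate as the integral over the ordered simplex $\{t_k < u_1 < \cdots < u_m < t_{k+2}\}$ of $d(\widehat{X}_k * \widehat{X}_{k+1})^{i_1}(u_1) \cdots d(\widehat{X}_k * \widehat{X}_{k+1})^{i_m}(u_m)$. The key structural observation is that, because the integration variables are strictly increasing, this simplex partitions into $m+1$ disjoint pieces indexed by $j \in \{0, \ldots, m\}$, where piece $j$ collects the configurations with $u_1, \ldots, u_j < t_{k+1} \le u_{j+1}, \ldots, u_m$ (the coincidence $u = t_{k+1}$ contributes only a null set and may be ignored). On piece $j$ the integrand uses $\widehat{X}_k$ for the first $j$ variables and $\widehat{X}_{k+1}$ for the remaining $m - j$, by definition of the concatenation.

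Next I would invoke Fubini's theorem to factor the integral over piece $j$ as a product of two lower-dimensional simplex integrals, since once the split index is fixed the constraint decouples into $\{t_k < u_1 < \cdots < u_j < t_{k+1}\}$ and $\{t_{k+1} < u_{j+1} < \cdots < u_m < t_{k+2}\}$. These two factors are precisely the coordinates $S(\widehat{X}_k)_{t_k, t_{k+1}}^{(i_1, \ldots, i_j)}$ and $S(\widehat{X}_{k+1})_{t_{k+1}, t_{k+2}}^{(i_{j+1}, \ldots, i_m)}$. Summing over $j$ then yields $\sum_{j=0}^m S(\widehat{X}_k)_{t_k,t_{k+1}}^{(i_1,\ldots,i_j)}\, S(\widehat{X}_{k+1})_{t_{k+1},t_{k+2}}^{(i_{j+1},\ldots,i_m)}$ for the level-$m$, index-$I$ coordinate.

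Finally I would match this against the stated definition of $\otimes$: the $(i_1, \ldots, i_m)$-coordinate of the level-$m$ component $\sum_{j=0}^m A_j \otimes B_{m-j}$ is exactly this sum, where $A_j$ is the level-$j$ signature of $\widehat{X}_k$ and $B_{m-j}$ the level-$(m-j)$ signature of $\widehat{X}_{k+1}$. Since $m$ and $I$ were arbitrary, the two tensor-algebra elements agree in every coordinate, which is the claim. I expect the main obstacle to be the combinatorial bookkeeping in the simplex decomposition, namely carefully checking that the partition over $j$ is exhaustive and disjoint up to null sets and that the Fubini factorization of each piece is legitimate; by contrast, the tensor-algebra matching at the end is essentially definitional once the sum is in hand.
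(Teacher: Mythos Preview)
Your argument is correct and is precisely the standard proof of Chen's relation: partition the ordered simplex by the number of integration variables lying before the junction time, factor each piece by Fubini, and identify the resulting sum with the tensor-algebra product. The paper itself does not supply a proof of this proposition at all; it simply states the identity and refers the reader to \cite{lyons2007differential}, so your write-up in fact goes beyond what the paper provides.
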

Let $\mathbf{X} = (X_0, ..., X_L)$ be a time-series. Then the linear interpolation $\tilde{X}:[0, T] \rightarrow \R^d$ can be represented as the concatenation of a finite number of linear paths:
\begin{align}
    \tilde{X} = \widehat{X}_0 * \cdots * \widehat{X}_{L-1}\,.
\end{align}
Hence, the signature is
\begin{align}
    S(\tilde{X})_{0, T} = S(\widehat{X}_0)_{0, t_1} \otimes \cdots \otimes S(\widehat{X}_{L-1})_{t_{L-1}, T}\,.
\end{align}

\section{Related Work, Experimental Choices, and Impact Statement}
\label{app:related}

\paragraph{Continuous-time models.} Since their introduction in \cite{chen2018neural}, Neural ODEs were extended in various ways to facilitate modelling continuous time-series data \cite{rubanova2019latent, norcliffe2020second, hasani2021liquid, holt2022neural, seedat2022continuous}\nocite{cartea2023detecting}. While Neural ODEs and their extensions are successful in certain tasks they are burdened with a high computational cost, which makes them scale very poorly to long sequences in the time-series setting. Various authors propose methods and augmentations to vanilla Neural ODEs to decrease their computational overhead \cite{dupont2019augmented, bilos2021neural}. Other approaches to augmenting deep learning methods to modelling continuous data include implicit neural representations \cite{sitzmann2020implicit, fons2022hypertime}, continuous kernel convolutions \cite{romerockconv}, or Fourier neural operators \cite{li2020fourier, park2023learning}.

\paragraph{Transformers.}  First proposed in \cite{vaswani2017attention}, the Transformer has been exceptionally successful in discrete sequence modelling tasks such as natural language processing (NLP). Key to the success of the Transformer in NLP is the attention mechanism, which extracts long-range dependencies.  There are a number of extensions to improve efficiency and decrease the computation cost of the attention mechanism \cite{li2019enhancing, wang2020linformer, dao2022flashattention, katharopoulos2020transformers, choromanski2020rethinking}.

\paragraph{Signatures in machine learning.} The path signature originates from theoretical stochastic analysis \cite{lyons2007differential} and has since become a popular tool in machine learning. Path signatures are regarded as effective feature transformations for sequential data \cite{perez2018signature, fermanian2021embedding, kidger2018deep}. Additionally, signatures help mitigate the computational cost of Neural CDEs in long time-series \cite{morrill2021neural} and non-Markovian stochastic control problems \cite{hoglund2023neural}. Other more recent works in this direction include \cite{cirone2024theoretical,walker2024log}. Approaches such as randomized signatures \cite{cuchiero2021discrete, compagnoni2023effectiveness} and the signature kernel \cite{lemercier2021siggpde, salvi2021higher} have been developed to mitigate the curse of dimensionality inherent in path signature computations.
Rough Transformers provide a first step towards incorporating path signatures for continuous-time sequence modelling using Transformers. \footnote{For a preliminary version of this paper, we also direct the reader to \cite{moreno2024rough}.} 

We also note that contemporary work \cite{tong2023sigformer} employs a Transformer architecture with signature features for the task of deep hedging. However, our work differs in several key aspects. First, we introduce the multi-view attention mechanism, which uses signatures to extract both global and local information, which we found to be necessary in our experimentation, as Transformers are known to struggle in extracting local information (see Figure \ref{fig:abl sin}), whereas their work just uses a global signature. Moreover, their work computes the signature at every time step, strictly dilating input data. This is particularly problematic for long, multi-variate sequences, for reasons discussed above, and can actually negatively impact performance. Our work, however, \textit{compresses} data using the multi-view signature transform, and we find that this compressed representation can actually improve performance. Finally, their work relies on the assumption that data is regularly sampled, as the signature is computed at every time step, in contrast to our work which is robust to irregular sampling. 

\paragraph{Long-Range Sequence modelling.}

A highly relevant line of research related to enhancing recurrent neural networks' capability to capture long-term dependencies involves the development of various models. These include Unitary RNNs \cite{arjovsky2016}, Orthogonal RNNs \cite{Henaff2016}, expRNNs \cite{lezcano2019}, chronoLSTM \cite{tallec2018}, antisymmetric RNNs \cite{chang2019}, Lipschitz RNNs \cite{erichson2021}, coRNNs \cite{rusch2021a}, unicoRNNs \cite{rusch2021b}, LEMs \cite{rusch2022}, waveRNN \cite{keller2023}, Linear Recurrent Units \cite{orvieto2023}, and Structured State Space Models \cite{gu2021, gu2023}. While we utilize many benchmarks and synthetic tasks from these works to test our model, it is important to note that our work is not intended to compete with the state-of-the-art in these tasks. Therefore, we do not directly compare our model with the models mentioned above. Instead, this paper seeks to show that the baseline Transformer architecture can benefit from the use of signatures by (i) becoming more computationally efficient, (ii) being invariant to the sampling rate of the signal, and (iii) having a good inductive bias for temporal and spatial processing. Furthermore, we highlight that \texttt{RFormer} brings alternative benefits, such as the ability to perform spatial processing effectively, which is a setting in which long-range sequence models typically struggle. 

\paragraph{Efficient Attention Variants.} There are several efficient self-attention variants that have emerged over the years, including Sparse Transformer \cite{child2019generating}, Longformer \cite{beltagy2020longformer}, Linear Transformers \cite{katharopoulos2020transformers}, BigBird \cite{zaheer2020big}, Performer \cite{choromanski2020rethinking}, or Diffuser \cite{feng2023diffuser}. In our setting, we highlight that a central part of this paper is to showcase how signatures significantly reduce the computational requirements of vanilla attention and empirically demonstrate that this also results in improved learning dynamics and invariance to the sampling frequency of the signal. Given the large efficiency gains that we observed with this approach when employed on vanilla attention, we did not consider that further experimentation on other forms of “approximate” attention was needed. Since most variants of attention seek to make the operation more efficient through several approximations (e.g., linearization or sparsification techniques), we believe that a first attempt at showcasing the power of multi-view signatures on vanilla attention is already significant. However, other variants of attention (such as the ones outlined before) could be added on top of the signature representations to obtain even better efficiency gains.

\paragraph{Limitations and Future Work.}

While we found RFormer to be very performant in our experiments, much of this performance gain relies on heavy hyperparameter tuning, especially when it comes to the choice of window sizes and signature level. However, this could be handled using Neural Architecture Search (NAS) techniques, such as those employed in \cite{saez2024neural}. Furthermore, despite the computational gains we achieve for low-dimensional sequences, additional work would be required to scale this method to larger dimensions. We should also note that the experiments and results presented in this paper are constrained by the relatively small scale of the models studied.


\section{Experimental Details}
\label{app:exp_details}

All experiments are conducted on an NVIDIA GeForce RTX 3090 GPU with 24,564 MiB of memory, utilizing CUDA version 12.3. 
Hyperparameters used to produce the results in Table \ref{tab:result1} are reported in Tables \ref{tab:hyperparameters_used}. The timings presented in all tables are obtained by executing each model independently for each dataset and averaging the resulting times across 100 epochs.




\begin{table}[h!]
    \centering
    \caption{Hyperparameters used for Table \ref{tab:result1}, where G and L refer to the Global and Local signature components, respectively.}
    \label{tab:hyperparameters_used}
    \begin{tabular}{lcccccc}
        \toprule
        & SCP1 & SCP2 & MI & EW & ETC & HB \\
        \midrule
        Batch Size         & 20 & 10 & 50 & 5  & 10  & 20 \\
        Embedded Dim       & 10 & 5  & 20 & 20 & 20  & 5  \\
        Multi-View Terms   & [G] & [G] & [L] & [L] & [G] & [G, L] \\
        Learning Rate      & 4.08e-3 & 1.38e-3 & 4.08e-3 & 6.73e-3 & 1.00e-3 & 7.72e-3 \\
        Num. Heads         & 3   & 3   & 3   & 1   & 1   & 3  \\
        Num. Layers        & 2   & 3   & 3   & 2   & 1   & 3  \\
        Num. Sig Windows   & 100 & 50  & 200 & 10  & 400 & 30 \\
        Sig Level          & 2   & 3   & 2   & 2   & 1   & 2  \\
        Univariate         & true & true & true & false & false & true \\
        Num. Epoch         & 110 & 10  & 26  & 39  & 200 & 16 \\
        \bottomrule
    \end{tabular}

\end{table}

\begin{table}[H]
\centering
\caption{Hyperparameters validation on remaining datasets.}
\label{table:hyperps_validation_sin}
\resizebox{0.99\textwidth}{!}{ 
 \begin{tabular}{cccccc}
\toprule
\textbf{Dataset}        & \textbf{Learning Rate} & \textbf{Number of Windows} & \textbf{Sig. Depth} & \textbf{Sig. Type} & \textbf{Univariate/Multivariate Sig.}  \\ \toprule
Sinusoidal & $1\times10^{-3}$ & 75 &  6 & Multi-View & -             \\
HR & $1\times10^{-3}$ & 75 & 4 & Local &  Multivariate     \\
\toprule 
\end{tabular}
}
\end{table}

To prevent excessive growth in signature terms, we use the univariate signature in LOB datasets. As an alternative, one could employ randomized signatures \cite{compagnoni2023effectiveness} or low-rank approximations \cite{cartea2023detecting, chang2023low} .

\section{Baselines Validation}
\label{app:hyper}

This section collects the validation of Step and Depth for the Neural-RDE model. Optimal values are selected for evaluation on test-set. Early-stopping is used with the same criteria as \cite{morrill2021neural}.

\begin{table}[H]
\centering
	\caption{Validation accuracy on the sinusoidal dataset.}\label{table:nRDE_validation_sin}
 \begin{center}
\begin{tabular}{ccccc}
\toprule
\textbf{Acc. Val}        & \textbf{Step} & \textbf{Depth} & \textbf{Memory Usage (Mb)} & \textbf{Elapsed Time (s)} \\ \toprule
17.26          & 2             & 2              & 778.9                  & 6912.7                 \\
12.21          & 2             & 3              & 770.3                  & 1194.43                \\
16.35          & 4             & 2              & 382.2                  & 2702.48                \\
19.27          & 4             & 3              & 386.16                 & 574.97                 \\
20.99          & 8             & 2              & 193                    & 1321.36                \\
\textbf{24.02} & \textbf{8}    & \textbf{3}     & \textbf{194.17}        & \textbf{332.17}        \\
17.15          & 16            & 2              & 97.13                  & 136.43                 \\
21.59          & 16            & 3              & 98.17                  & 156.93                 \\
17.46          & 24            & 2              & 65.96                  & 105.94                 \\
20.59          & 24            & 3              & 66.68                  & 98.97             \\    \toprule 
\end{tabular}
\end{center}
\end{table}

\begin{table}[H]
\centering
	\caption{Validation accuracy on the long sinusoidal dataset.}\label{table:nRDE_validation_sinLong}
 \begin{center}
\begin{tabular}{ccccc}
\toprule \textbf{Acc. Val} & \textbf{Step} & \textbf{Depth} & \textbf{Memory Usage (Mb)} & \textbf{Elapsed Time (s)} \\ \toprule
11.10            & 2             & 2              & 4017.22                & 2961.98                \\
9.59             & 2             & 3              & 4008.33                & 2779.52                \\
10.39            & 4             & 2              & 2001.76                & 1677.78                \\
10.19            & 4             & 3              & 2006.80                & 1615.64                \\
14.03            & 8             & 2              & 1004.07                & 665.55                 \\
\textbf{15.34}   & \textbf{8}    & \textbf{3}     & \textbf{1005.72}       & \textbf{723.41}        \\
1.61             & 16            & 2              & 503.66                 & 125.85                 \\
1.92             & 16            & 3              & 505.28                 & 120.63                 \\
1.51             & 24            & 2              & 339.80                 & 58.87                  \\
2.12             & 24            & 3              & 341.90                 & 69.35              \\ \toprule   
\end{tabular}
\end{center}
\end{table}

\begin{table}[H]
\centering
	\caption{Validation accuracy on the EW dataset.}\label{table:nRDE_validation_EW}
 \begin{center}
\begin{tabular}{ccccc}
\toprule \textbf{Acc. Val} & \textbf{Step} & \textbf{Depth} & \textbf{Memory Usage (Mb)} & \textbf{Elapsed Time (s)} \\ \toprule
84.62            & 2             & 2              & 5799.40                & 21289.99               \\
\textbf{87.18}   & \textbf{2}    & \textbf{3}     & \textbf{6484.93}       & \textbf{25925.80}      \\
79.49            & 4             & 2              & 2891.61                & 11449.14               \\
82.05            & 4             & 3              & 3240.99                & 9055.12                \\
82.05            & 8             & 2              & 1446.94                & 4143.26                \\
76.92            & 8             & 3              & 1624.73                & 3616.43                \\
82.05            & 16            & 2              & 724.35                 & 1909.69                \\
76.92            & 16            & 3              & 817.04                 & 1924.27                \\
79.49            & 24            & 2              & 483.92                 & 1098.21                \\
74.36            & 24            & 3              & 543.78                 & 987.02      \\ \toprule           
\end{tabular}
\end{center}
\end{table}

\begin{table}[H]
\centering
	\caption{Validation loss on the HR dataset.}\label{table:nRDE_validation_HR}
 \begin{center}
\begin{tabular}{ccccc}
\toprule \textbf{Acc. Val} & \textbf{Step} & \textbf{Depth} & \textbf{Memory Usage (Mb)} & \textbf{Elapsed Time (s)} \\ \toprule
\textbf{2.44}     & \textbf{2}    & \textbf{2}     & \textbf{5044.44}       & \textbf{56492.33}      \\
3.03              & 2             & 3              & 5059.28                & 39855.19               \\
3.67              & 4             & 2              & 2515.40                & 10765.58               \\
16.04             & 4             & 3              & 2531.44                & 7157.20                \\
5.35              & 8             & 2              & 1259.30                & 3723.94                \\
2.70              & 8             & 3              & 1268.60                & 18682.82               \\
3.58              & 16            & 2              & 632.08                 & 3518.96                \\
3.64              & 16            & 3              & 636.64                 & 7922.96                \\
3.86              & 24            & 2              & 422.74                 & 3710.95                \\
3.55              & 24            & 3              & 426.83                 & 6567.02  \\ \bottomrule             
\end{tabular}
\end{center}
\end{table}

\begin{table}[H]
\centering
	\caption{Validation loss on the LOB dataset (1K), included as an additional experiment in Appendix \ref{subsec:additional_experiments}.}
 \label{table:nRDE_validation_LOB}
 \begin{center}
\begin{tabular}{ccccc}
\toprule
\textbf{Val Loss} & \textbf{Step} & \textbf{Depth} & \textbf{Memory Usage (Mb)} & \textbf{Elapsed Time (s)} \\
\toprule
\textbf{0.58}     & \textbf{2}    & \textbf{2}     & \textbf{1253.55}       & \textbf{180.79}        \\
1.74              & 2             & 3              & 1447.57                & 308.52                 \\
1.58              & 4             & 2              & 623.87                 & 71.18                  \\
32.90             & 4             & 3              & 754.05                 & 87.81                  \\
2.94              & 8             & 2              & 317.40                 & 61.27                  \\
4.84              & 8             & 3              & 406.88                 & 62.71                  \\
2.24              & 16            & 2              & 164.70                 & 18.67                  \\
6.26              & 16            & 3              & 234.92                 & 24.20                  \\
3.82              & 24            & 2              & 112.80                 & 12.69                  \\
15.35             & 24            & 3              & 176.68                 & 14.92 \\
\toprule
\end{tabular}
\end{center}
\end{table}

\section{Long Temporal Datasets Details}
\label{app:datasets_details}

Table \ref{table:datasets_summary} summarises the long temporal modeling datasets from the UEA time series classification archive \cite{bagnall2018uea} used in Section \ref{sec:experiments}.

\begin{table}[htbp]
    \centering
    \caption{Summary of datasets used in the long time-series classification task.}
    \label{table:datasets_summary}
    \begin{tabular}{lccccc}
        \toprule
        \textbf{Dataset} & \textbf{
        \#Sequences} & \textbf{Length} & \textbf{\#Classes} & \textbf{\#Dimensions}  \\
        \midrule
        SelfRegulationSCP1 (SCP1)& 561 & 896 & 2 & 6  \\
        SelfRegulationSCP2 (SCP2)& 380 & 1152 & 2 & 7  \\
        MotorImagery       (MI)& 378 & 3000 & 2 & 64  \\
        EigenWorms         (EW)& 259 & 17984 & 5 & 6 \\
        EthanolConcentration (ETC)& 524 & 1751 & 4 & 3  \\
        \bottomrule
    \end{tabular}
\end{table}

\section{Ablation Studies}
\label{app:multiview}

\subsection{Global and Local Signature Components}

In this section, we ablate the use of the multi-view signature transform over both global and local transformations of the input signal. The results for the sinusoidal datasets are shown in Figure \ref{fig:abl sin}. In most cases, the use of both local and global components improves the performance of \texttt{RFormer}. This choice, however, can be seen as a hyperparameter and will be dataset-dependent.

\begin{figure}[H]
    \centering
	\includegraphics[height =5cm]{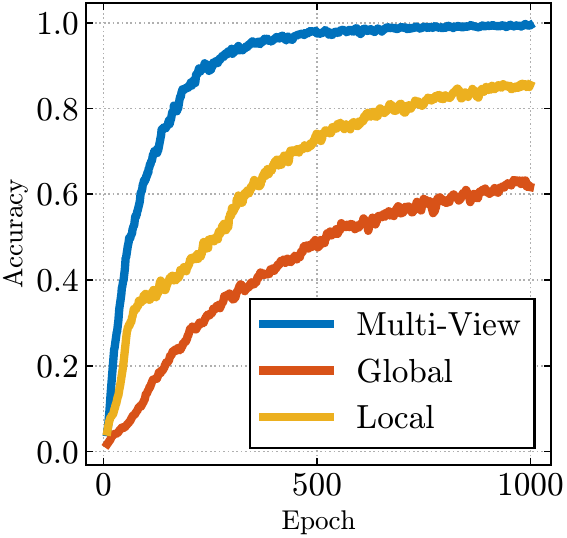}
 \includegraphics[height =5cm]{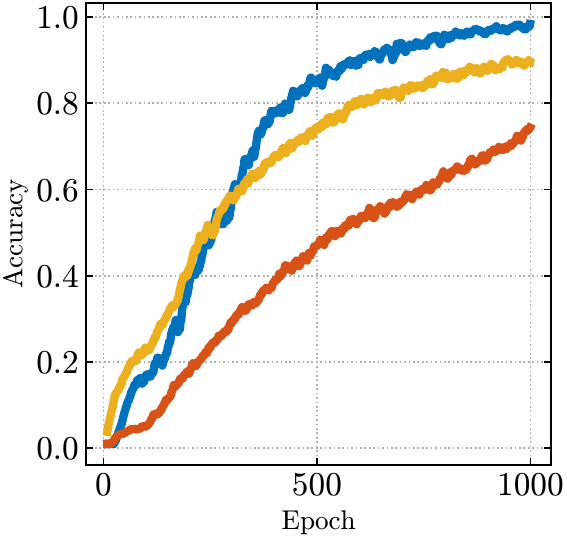}
	\caption{Ablation of local and local components of the multi-view signature for the sinusoidal datasets. \textbf{Left:} Sinusoidal dataset. \textbf{Right:} Long Sinusoidal dataset.}
	\label{fig:abl sin}
\end{figure}

\subsection{Signature Level and Naive Downsampling}\label{app:downsampling}

One of the main points of the paper is that the shorter representation of the time-series endowed by the signatures helps to significantly reduce the computational cost of the self-attention operation with minimal information loss (and with improved performance in many of the experiments). By equation \eqref{eq:sig}, one sees that the first level of the signature of a linear function is the difference between its endpoints. Hence, using multi-view attention with signature level one operates on the increments of piecewise-linear interpolated data, which corresponds to naive downsampling. To test that higher levels of the signature provide improvements in performance, we compare the result of using the signature on the datasets tested in Table \ref{table:comparison} below.

\begin{table}[H]
    \centering
    \caption{Comparative performance of different methods on datasets.}\label{table:comparison}
    \begin{center}
        \begin{small}
            \begin{tabular}{@{}lccc@{}}
                \toprule
                Dataset & Linear-Interpolation + Vanilla & Rough Transformer with sig level (n) & Improvement  \\
                \midrule
                EigenWorms & 64.10\% & 90.24\% (2) & 40.77\% \\
                HR & 10.56 & 2.66 (4) & 74.81\% \\
                \bottomrule
            \end{tabular}
        \end{small}
    \end{center}
\end{table}

There is a significant performance gain in considering higher levels of the signature because one can capture the higher-order interactions between the different time-series.


\section{Additional Experiments and Comparisons}

\label{app:add_experiments}

\subsection{Random Drop Experiments}

    Furthermore, we conduct a new set of experiments in which we dropped 30\% and 70\% of the dataset for RFormer. Note that even with a 70\% drop rate in the EigenWorms dataset, the vanilla Transformer fails to run due to memory limitations. Therefore, to provide results for the Transformer model on the EigenWorms dataset, we conduct experiments with an 85\% drop rate. This comparison highlights the performance gap between the vanilla Transformer and our proposed model under these conditions, with the RFormer model yielding superior results. All results are computed across five seeds and are summarized in the tables and figure below.

\begin{table}[h!]
    \centering
    \caption{Performance of models under various data drop scenarios for EW dataset.}\label{table:performance_drop}
    \begin{center}
        \begin{small}
            \begin{tabular}{@{}lccccc@{}}
                \toprule
                Model & Full & 30\% Drop & 50\% Drop & 70\% Drop & 85\% Drop \\
                \midrule
                Transformer & OOM & OOM & OOM & OOM & 72.45\% $\pm$ 3.36 \\
                RFormer & 90.24\% $\pm$ 2.15 & 87.86\% $\pm$ 3.28 & 87.69\% $\pm$ 4.97 & 83.35\% $\pm$ 2.86 & 82.74\% $\pm$ 2.13 \\
                \bottomrule
            \end{tabular}
        \end{small}
    \end{center}
\end{table}
\vspace{-0.2cm}
\begin{table}[h!]
    \centering
    \caption{Performance consistency of RFormer under data drop scenarios for HR dataset.}\label{table:rformer_drop_consistency}
    \begin{center}
        \begin{small}
            \begin{tabular}{@{}lcccc@{}}
                \toprule
                Model & Full & 30\% Drop & 50\% Drop & 70\% Drop \\
                \midrule
                RFormer & 2.66 $\pm$ 0.21 & 2.72 $\pm$ 0.19 & 2.82 $\pm$ 0.05 & 2.98 $\pm$ 0.08 \\
                \bottomrule
            \end{tabular}
        \end{small}
    \end{center}
\end{table}
\vspace{-0.2cm}
\begin{table}[h!]
    \centering
    \caption{Epoch-wise performance under different data drop scenarios for the sinusoidal dataset.}\label{table:epoch_performance}
    \begin{center}
        \begin{small}
            \begin{tabular}{@{}lcccc@{}}
                \toprule
                 & Epoch 100 & Epoch 250 & Epoch 500 & Epoch 1000 \\
                \midrule
                30\% Drop & 48.6\% & 82.5\% & 91.4\% & 99.3\% \\
                70\% Drop & 35.7\% & 56.8\% & 64.9\% & 67.8\% \\
                \bottomrule
            \end{tabular}
        \end{small}
    \end{center}
\end{table}
\vspace{-0.2cm}
\begin{table}[h!]
    \centering
    \caption{Epoch-wise performance under different data drop scenarios for the long sinusoidal dataset.}\label{table:epoch_comparison}
    \begin{center}
        \begin{small}
            \begin{tabular}{@{}lcccc@{}}
                \toprule
                 & Epoch 100 & Epoch 250 & Epoch 500 & Epoch 1000 \\
                \midrule
                30\% Drop & 39.1\% & 72.6\% & 96.2\% & 98.2\% \\
                70\% Drop & 27.5\% & 66.7\% & 78.5\% & 85.3\% \\
                \bottomrule
            \end{tabular}
        \end{small}
    \end{center}
\end{table}

\begin{figure}[h!]
    \centering
	\includegraphics[height =5cm]{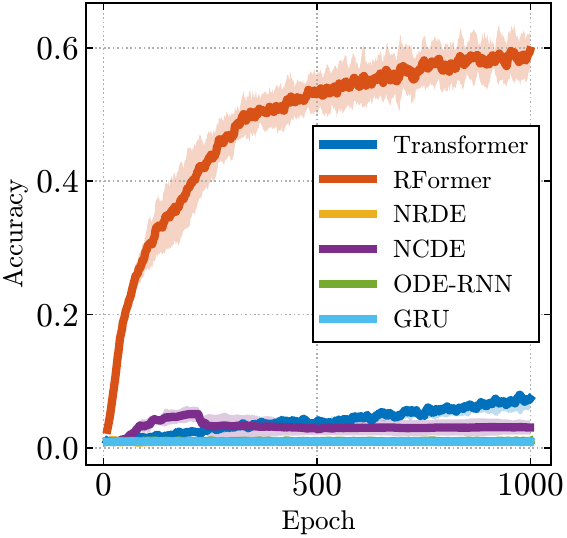}
    \includegraphics[height =5cm]{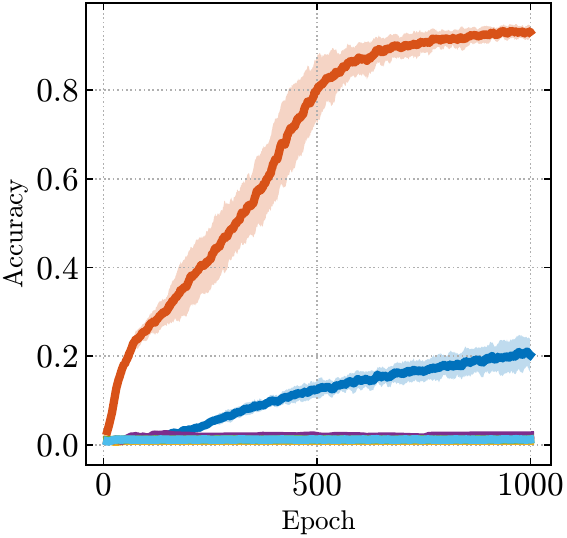}
	\caption{Test accuracy per epoch for the frequency classification
task across three random seeds for sinusoidal datasets with 50\% random drop per epoch. \textbf{Left:} Sinusoidal dataset. \textbf{Right:} Long Sinusoidal dataset.}
	\label{fig:test_drop_2}
\end{figure}

Finally, Table \ref{tab:cru_vs_rformer} compares CRU and RFormer in an irregularly sampled synthetic data setting, featuring shorter sinusoids and fewer classes than the experiments in Section \ref{subsec:time_series_processing}. Additionally, Table \ref{tab:cru_hypeps} presents the hyperparameter validation for CRU (see Table \ref{table:speed1} for training time analysis). These experiments demonstrate that recurrent models perform well with short sequences. Note that despite RFormer's superior performance, our model is significantly faster than other continuous-time models, as shown in Appendix \ref{subsec:additional_efficiency_experiments}, particularly in Table \ref{table:speed1}.

\begin{table}[H]
\caption{Comparison of RFormer and CRU (two best and simplest performing instances [Num.basis/Bandwidth$=20/3$]) at different random drop percentages.}
      \centering
       \label{tab:cru_vs_rformer}
\begin{tabular}{@{}ccccc@{}}
\toprule
L & Random Drop  & RFormer & CRU (LSD=10) & CRU (LSD=20) \\ \midrule
\multirow{5}{*}{100} & 0\% & 100.00\% & 100\% & 100.00\% \\
 & 30\% & 98.60\% & 65.90\% & 99.60\% \\
 & 50\% & 97.80\% & 34.40\% & 94.70\% \\
 & 70\% & 96.10\% & 43.00\% & 78.60\% \\
 & 85\% & 85.50\% & 32.30\% & 57.30\% \\ \midrule
\multirow{5}{*}{250} & 0\% & 100.00\% & 100.00\% & 100\% \\
 & 30\% & 99.90\% & 42.95\% & 94.90\% \\
 & 50\% & 99.40\% & 43.65\% & 77.30\% \\
 & 70\% & 98.30\% & 45.40\% & 94.40\% \\
 & 85\% & 86.20\% & 38.80\% & 83.60\% \\ \midrule
\multirow{5}{*}{500} & 0\% & 100.00\% & 100.00\% & OOM \\
 & 30\% & 99.90\% & 47.15\% & OOM \\
 & 50\% & 99.70\% & 48.80\% & OOM \\
 & 70\% & 99.30\% & 55.15\% & OOM \\
 & 85\% & 87.70\% & 46.50\% & OOM \\ \bottomrule
\end{tabular}
  \end{table}

  \begin{table}[H]
    \caption{CRU's hyperparameters ($L=100$) (latent state dimension (LSD), number of basis matrices (Num.basis), and their bandwidth).}
      \centering
       \label{tab:cru_hypeps}
\begin{tabular}{@{}cccc@{}}
\toprule
LSD & Num. basis & Bandwidth & Acc (30 Epochs) \\ \midrule
\multirow{4}{*}{10} & 15 & 3 & 78\% \\
 & 15 & 10 & - \\
 & 20 & 3 & 100\% \\
 & 20 & 10 & - \\ \midrule
\multirow{4}{*}{20} & 15 & 3 & 81.30\% \\
 & 15 & 10 & 91.70\% \\
 & 20 & 3 & 100\% \\
 & 20 & 10 & 99.90\% \\ \midrule
\multirow{4}{*}{40} & 15 & 3 & 99.90\% \\
 & 15 & 10 & 97.50\% \\
 & 20 & 3 & 100\% \\
 & 20 & 10 & 100\% \\ \bottomrule
\end{tabular}
  \end{table}

\subsection{Additional Efficiency Experiments and Discussion}
\label{subsec:additional_efficiency_experiments}

We conduct additional experiments to compare the runtime of Rough Transformers with other models.
In this experiment, we use the synthetic sinusoidal dataset considered in our paper and compute the runtime per epoch for varying sequence lengths. We demonstrate results for two variants of RFormer: ``online”, which corresponds to computing the signatures of each batch during training (resulting in significant redundant computation), and ``offline”, which corresponds to computing the signatures in one go at the beginning of training. We include a recent RNN-based model as a basis for comparison with high-performing RNN baselines. In addition to the models discussed in Section \ref{sec:experiments}, we introduce Continuous Recurrent Units (CRU) \cite{schirmer2022modeling} as a new baseline. See Table \ref{table:speed1} for a summary of the results.

\begin{table}[H]
	\centering
	\caption{Seconds per epoch for growing input length and for different model types on the sinusoidal dataset.}\label{table:speed1}
	\begin{center}
		\begin{small}
			\begin{tabular}{@{}lcccccccc@{}}
				\toprule
				\multirow{2}{*}{Model} & \multicolumn{8}{c}{S/E for Varying Context Length $\downarrow$}  \\ \cmidrule(l){2-9}
            & \textbf{L=100} & \textbf{L=250} & \textbf{L=500} & \textbf{L=1000} & \textbf{L=2500} & \textbf{L=5000} & \textbf{L=7.5k} & \textbf{L=10k} \\ \midrule   
            NRDE & 5.87 & 11.67 & 20.27 & 44.01 & 103.11 & 201.21 & 312.31 & 467.47 \\
            NCDE & 42.59 & 121.82 & 225.14 & 458.09 & 1126.77 & 2813.42 & 4199.50 & 5345.39 \\
            GRU & 1.56 & 1.55 & 1.65 & 1.63 & 1.78 & 2.37 & 3.65 & 4.79 \\ 
            CRU & 59.22 & 199.15 & 789.28 & OOM & OOM & OOM & OOM & OOM \\
    		ContiFormer & 61.36 & 248.31 & 1165.02 & OOM & OOM & OOM & OOM & OOM \\ 
            Transformer & 0.75 & 0.79 & 0.82 & 0.95 & 1.36 & 5.31 & 9.32 & 16.32 \\ 
            RFormer (Online) & 0.75 & 0.88 & 0.94 & 1.03 & 1.28 & 1.55 & 1.83 & 2.35 \\ 
            RFormer (Offline) & 0.67 & 0.64 & 0.63 & 0.65 & 0.60 & 0.59 & 0.62 & 0.60 \\            
            \bottomrule
			\end{tabular}
		\end{small}
	\end{center}
\end{table}

We remark that previous running times are obtained with a batch size of 10. Further, the \texttt{ContiFormer} model could be run for $L=1000$ if decreasing the batch size to 2 (which significantly affects the parallelization process), avoiding OOM issues and resulting in 4025 seconds/epoch, which is several orders of magnitude larger than \texttt{RFormer}. As an additional experiment, we tested the epoch time (S/E) of \texttt{RFormer} for extremely oversampled sinusoidal time series. We show our results in the table below.

\begin{table}[H]
	\centering
	\caption{Seconds per epoch for very large input length.}\label{table:speed2}
	\begin{center}
		\begin{small}
			\begin{tabular}{@{}lcccc@{}}
				\toprule
				\multirow{2}{*}{Model} & \multicolumn{4}{c}{S/E for Varying Context Length $\downarrow$}  \\ \cmidrule(l){2-5}
            & \textbf{L=25k} & \textbf{L=50k} & \textbf{L=100k} & \textbf{L=250k}  \\ \midrule   
            RFormer (Online) & 5.39 & 9.06 & 19.95 & 45.20  \\ 
            RFormer (Offline) & 0.60 & 0.61 & 0.60 & 0.63  \\ 
            \bottomrule
			\end{tabular}
		\end{small}
	\end{center}
\end{table}

Thus, the time needed to compute the signature is inconsequential when compared with the time required to train standard models on the full or even downsampled datasets, since this step has to be carried out only once. To put this into context with an example, we note that it takes $~$4s to compute the signature representations for the HR dataset (which is about half the time it takes for the Vanilla Transformer to go through one epoch) and results in a $~$26$\times$ increase in computational speed for RFormer when compared to the vanilla Transformer.

\begin{table}[h]
\fontsize{10.0}{16}\selectfont
\centering
\caption{Processing times for different sizes on the sinusoidal dataset.}\label{tab:explosion}
\resizebox{\textwidth}{!}{
\begin{tabular}{lccccccccccccc}
\hline
\textbf{Size} & \textbf{100} & \textbf{250} & \textbf{500} & \textbf{1k} & \textbf{2.5k} & \textbf{5k} & \textbf{7.5k} & \textbf{10k} & \textbf{25k} & \textbf{50k} & \textbf{75k} & \textbf{100k} \\
\hline
\textbf{Time} & 0.15 s & 0.21 s & 0.24 s & 0.39 s & 0.42 s & 0.51 s & 0.70 s & 1.09 s & 1.64 s & 2.94 s & 4.49 s & 5.74 s \\
\hline
\end{tabular}
}
\end{table}

To showcase that this is the case for not only sequences of moderate length but also extremely long sequences, we also carry out the following experiment where we compute the signature representation for the sine dataset, with a progressively increasing number of datapoints. As seen in Table \ref{tab:explosion}, this does not cause an explosion in computational time.

\begin{figure}[H]
    \centering
    \begin{subfigure}[b]{0.3\textwidth}
        \centering
        \includegraphics[width=\textwidth]{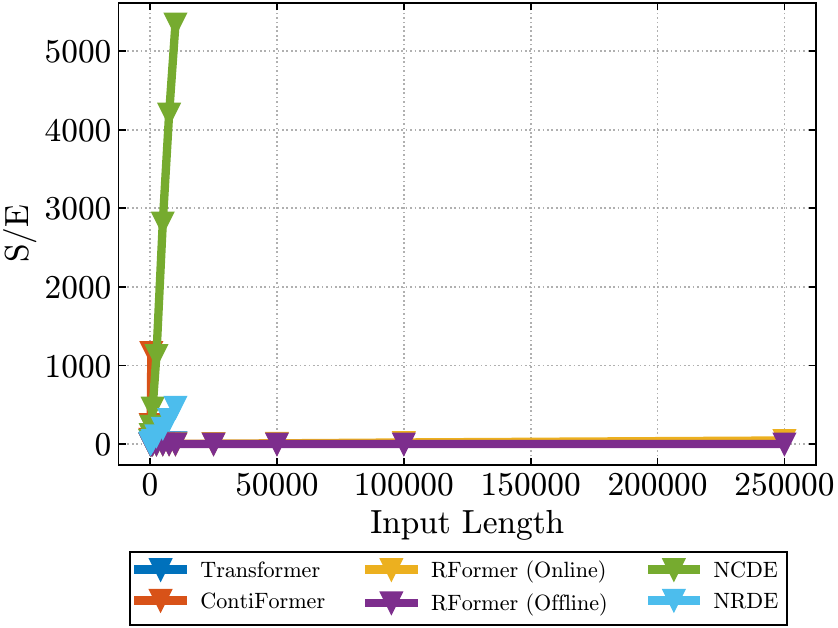}
        \label{fig:}
    \end{subfigure}
    \begin{subfigure}[b]{0.3\textwidth}
        \centering
        \includegraphics[width=\textwidth]{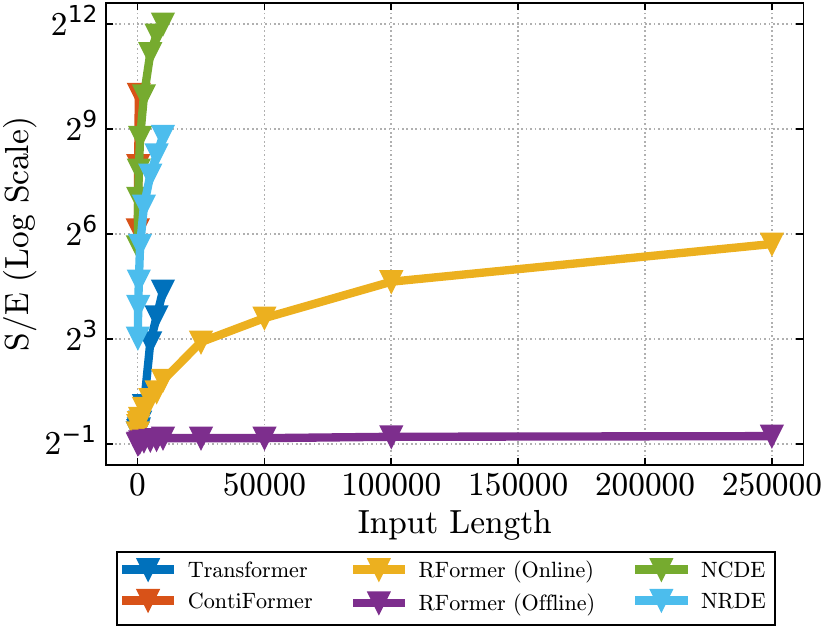}
        \vspace{-0.4cm}
        \label{fig:log_growth_10k}
    \end{subfigure}
    \begin{subfigure}[b]{0.3\textwidth}
        \centering
        \includegraphics[width=\textwidth]{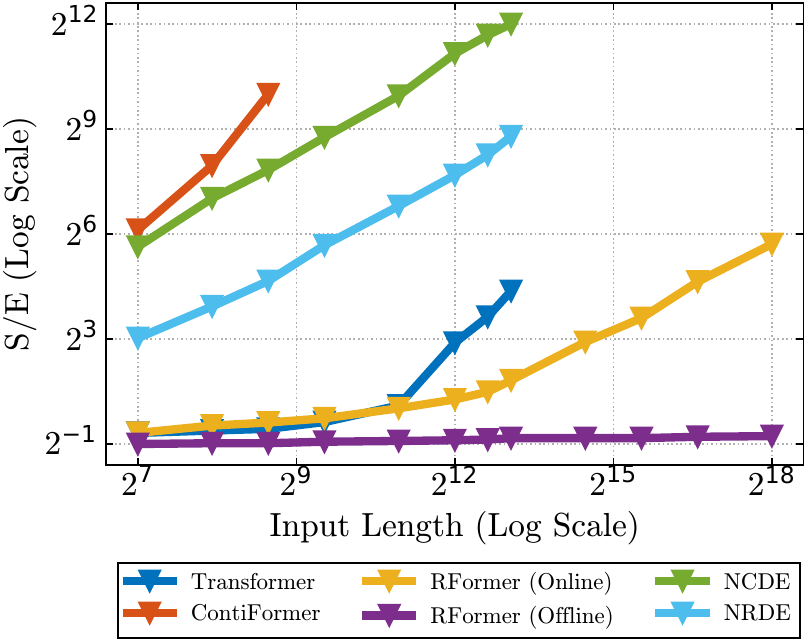}
        \vspace{-0.1cm}
        \label{fig:loglog_growth_10k}
    \end{subfigure}
    \caption{Seconds per epoch for growing input length and for different model types on the sinusoidal dataset for extremely long lengths (up to 250k) \textbf{Left:} Log Scale. \textbf{Middle:} Regular Scale. \textbf{Right:} Log-log scale. When a line stops, it indicates an OOM error.}
    \label{fig:scaling_long}
\end{figure}

\subsection{Additional ContiFormer Comparisons}

Also, to provide some context of the performance of \texttt{ContiFormer} compared with our method (and not only results on complexity and training times), we run the model on the sinusoidal classification task for signals of length $L=100$ and $L=250$. Due to the slow running time of the \texttt{ContiFormer} model, we did not consider sequence lengths of $L>250$. We evaluate the \texttt{ContiFormer} model using one head. However, given the subpar results we obtain, we also test it with four heads, using the hyperparameters originally used in the paper for their irregularly sampled time series classification experiments. By contrast, all variations of \texttt{RFormer} tested in this paper for this experiment employ only one head, but reported significantly better results.

\begin{table}[H]
    \centering
    \caption{Model performance for $L=100$.}\label{table:model_performance_1}
    \begin{center}
        \begin{small}
            \begin{tabular}{@{}lccc@{}}
                \toprule
                Model & Epoch 100 & Epoch 250 & Epoch 500 \\
                \midrule
                ContiFormer (1 Head) & 2.3\% & 2.8\% & 3.1\% \\
                ContiFormer (4 Heads) & 8.5\% & 17.3\% & 20.0\% \\
                Transformer (1 Head) & 13.7\% & 40.1\% & 82.8\% \\
                RFormer (1 Head) & 38.7\% & 81.1\% & 92.3\% \\
                \bottomrule
            \end{tabular}
        \end{small}
    \end{center}
\end{table}

\clearpage

\section*{NeurIPS Paper Checklist}

\begin{enumerate}

\item {\bf Claims}
    \item[] Question: Do the main claims made in the abstract and introduction accurately reflect the paper's contributions and scope?
    \item[] Answer: \answerYes{} 
    \item[] Justification: Yes, they are an accurate reflection of the paper's contributions and scope.
   \item[] Guidelines:
    \begin{itemize}
        \item The answer NA means that the abstract and introduction do not include the claims made in the paper.
        \item The abstract and/or introduction should clearly state the claims made, including the contributions made in the paper and important assumptions and limitations. A No or NA answer to this question will not be perceived well by the reviewers. 
        \item The claims made should match theoretical and experimental results, and reflect how much the results can be expected to generalize to other settings. 
        \item It is fine to include aspirational goals as motivation as long as it is clear that these goals are not attained by the paper. 
    \end{itemize}
\item {\bf Limitations}
    \item[] Question: Does the paper discuss the limitations of the work performed by the authors?
    \item[] Answer: \answerYes{} 
    \item[] Justification: Yes, the limitations ae discussed in Appendix B.
    \item[] Guidelines:
    \begin{itemize}
        \item The answer NA means that the paper has no limitation while the answer No means that the paper has limitations, but those are not discussed in the paper. 
        \item The authors are encouraged to create a separate "Limitations" section in their paper.
        \item The paper should point out any strong assumptions and how robust the results are to violations of these assumptions (e.g., independence assumptions, noiseless settings, model well-specification, asymptotic approximations only holding locally). The authors should reflect on how these assumptions might be violated in practice and what the implications would be.
        \item The authors should reflect on the scope of the claims made, e.g., if the approach was only tested on a few datasets or with a few runs. In general, empirical results often depend on implicit assumptions, which should be articulated.
        \item The authors should reflect on the factors that influence the performance of the approach. For example, a facial recognition algorithm may perform poorly when image resolution is low or images are taken in low lighting. Or a speech-to-text system might not be used reliably to provide closed captions for online lectures because it fails to handle technical jargon.
        \item The authors should discuss the computational efficiency of the proposed algorithms and how they scale with dataset size.
        \item If applicable, the authors should discuss possible limitations of their approach to address problems of privacy and fairness.
        \item While the authors might fear that complete honesty about limitations might be used by reviewers as grounds for rejection, a worse outcome might be that reviewers discover limitations that aren't acknowledged in the paper. The authors should use their best judgment and recognize that individual actions in favor of transparency play an important role in developing norms that preserve the integrity of the community. Reviewers will be specifically instructed to not penalize honesty concerning limitations.
    \end{itemize}

\item {\bf Theory Assumptions and Proofs}
    \item[] Question: For each theoretical result, does the paper provide the full set of assumptions and a complete (and correct) proof?
    \item[] Answer: \answerYes{} 
    \item[] Justification: The paper contains only one theoretical result, and a complete and correct proof is provided.
    \item[] Guidelines:
    \begin{itemize}
        \item The answer NA means that the paper does not include theoretical results. 
        \item All the theorems, formulas, and proofs in the paper should be numbered and cross-referenced.
        \item All assumptions should be clearly stated or referenced in the statement of any theorems.
        \item The proofs can either appear in the main paper or the supplemental material, but if they appear in the supplemental material, the authors are encouraged to provide a short proof sketch to provide intuition. 
        \item Inversely, any informal proof provided in the core of the paper should be complemented by formal proofs provided in appendix or supplemental material.
        \item Theorems and Lemmas that the proof relies upon should be properly referenced. 
    \end{itemize}

    \item {\bf Experimental Result Reproducibility}
    \item[] Question: Does the paper fully disclose all the information needed to reproduce the main experimental results of the paper to the extent that it affects the main claims and/or conclusions of the paper (regardless of whether the code and data are provided or not)?
    \item[] Answer: \answerYes{}{} 
    \item[] Justification: All experimental details are in Appendix C.
    \item[] Guidelines:
    \begin{itemize}
        \item The answer NA means that the paper does not include experiments.
        \item If the paper includes experiments, a No answer to this question will not be perceived well by the reviewers: Making the paper reproducible is important, regardless of whether the code and data are provided or not.
        \item If the contribution is a dataset and/or model, the authors should describe the steps taken to make their results reproducible or verifiable. 
        \item Depending on the contribution, reproducibility can be accomplished in various ways. For example, if the contribution is a novel architecture, describing the architecture fully might suffice, or if the contribution is a specific model and empirical evaluation, it may be necessary to either make it possible for others to replicate the model with the same dataset, or provide access to the model. In general. releasing code and data is often one good way to accomplish this, but reproducibility can also be provided via detailed instructions for how to replicate the results, access to a hosted model (e.g., in the case of a large language model), releasing of a model checkpoint, or other means that are appropriate to the research performed.
        \item While NeurIPS does not require releasing code, the conference does require all submissions to provide some reasonable avenue for reproducibility, which may depend on the nature of the contribution. For example
        \begin{enumerate}
            \item If the contribution is primarily a new algorithm, the paper should make it clear how to reproduce that algorithm.
            \item If the contribution is primarily a new model architecture, the paper should describe the architecture clearly and fully.
            \item If the contribution is a new model (e.g., a large language model), then there should either be a way to access this model for reproducing the results or a way to reproduce the model (e.g., with an open-source dataset or instructions for how to construct the dataset).
            \item We recognize that reproducibility may be tricky in some cases, in which case authors are welcome to describe the particular way they provide for reproducibility. In the case of closed-source models, it may be that access to the model is limited in some way (e.g., to registered users), but it should be possible for other researchers to have some path to reproducing or verifying the results.
        \end{enumerate}
    \end{itemize}

\item {\bf Open access to data and code}
    \item[] Question: Does the paper provide open access to the data and code, with sufficient instructions to faithfully reproduce the main experimental results, as described in supplemental material?
    \item[] Answer: \answerYes{}{} 
    \item[] Justification: All datasets used are publicly available and the associated code can be found in the following anonymized repo: \url{https://anonymous.4open.science/r/rformer_submission-2546}. 
    \item[] Guidelines:
    \begin{itemize}
        \item The answer NA means that paper does not include experiments requiring code.
        \item Please see the NeurIPS code and data submission guidelines (\url{https://nips.cc/public/guides/CodeSubmissionPolicy}) for more details.
        \item While we encourage the release of code and data, we understand that this might not be possible, so “No” is an acceptable answer. Papers cannot be rejected simply for not including code, unless this is central to the contribution (e.g., for a new open-source benchmark).
        \item The instructions should contain the exact command and environment needed to run to reproduce the results. See the NeurIPS code and data submission guidelines (\url{https://nips.cc/public/guides/CodeSubmissionPolicy}) for more details.
        \item The authors should provide instructions on data access and preparation, including how to access the raw data, preprocessed data, intermediate data, and generated data, etc.
        \item The authors should provide scripts to reproduce all experimental results for the new proposed method and baselines. If only a subset of experiments are reproducible, they should state which ones are omitted from the script and why.
        \item At submission time, to preserve anonymity, the authors should release anonymized versions (if applicable).
        \item Providing as much information as possible in supplemental material (appended to the paper) is recommended, but including URLs to data and code is permitted.
    \end{itemize}
    
\item {\bf Experimental Setting/Details}
    \item[] Question: Does the paper specify all the training and test details (e.g., data splits, hyperparameters, how they were chosen, type of optimizer, etc.) necessary to understand the results?
    \item[] Answer: \answerYes{}{} 
    \item[] Justification: All details are specified in Appendices C and D. 
    \item[] Guidelines:
    \begin{itemize}
        \item The answer NA means that the paper does not include experiments.
        \item The experimental setting should be presented in the core of the paper to a level of detail that is necessary to appreciate the results and make sense of them.
        \item The full details can be provided either with the code, in appendix, or as supplemental material.
    \end{itemize}

\item {\bf Experiment Statistical Significance}
    \item[] Question: Does the paper report error bars suitably and correctly defined or other appropriate information about the statistical significance of the experiments?
    \item[] Answer: \answerYes{}{} 
    \item[] Justification: All experiments are run with several seeds, and the standard deviation is reported alongside the average results.
    \item[] Guidelines:
    \begin{itemize}
        \item The answer NA means that the paper does not include experiments.
        \item The authors should answer "Yes" if the results are accompanied by error bars, confidence intervals, or statistical significance tests, at least for the experiments that support the main claims of the paper.
        \item The factors of variability that the error bars are capturing should be clearly stated (for example, train/test split, initialization, random drawing of some parameter, or overall run with given experimental conditions).
        \item The method for calculating the error bars should be explained (closed form formula, call to a library function, bootstrap, etc.)
        \item The assumptions made should be given (e.g., Normally distributed errors).
        \item It should be clear whether the error bar is the standard deviation or the standard error of the mean.
        \item It is OK to report 1-sigma error bars, but one should state it. The authors should preferably report a 2-sigma error bar than state that they have a 96\% CI, if the hypothesis of Normality of errors is not verified.
        \item For asymmetric distributions, the authors should be careful not to show in tables or figures symmetric error bars that would yield results that are out of range (e.g. negative error rates).
        \item If error bars are reported in tables or plots, The authors should explain in the text how they were calculated and reference the corresponding figures or tables in the text.
    \end{itemize}

\item {\bf Experiments Compute Resources}
    \item[] Question: For each experiment, does the paper provide sufficient information on the computer resources (type of compute workers, memory, time of execution) needed to reproduce the experiments?
    \item[] Answer: \answerYes{} 
    \item[] Justification: All experiments were conducted on an NVIDIA GeForce RTX 3090 GPU with 24,564 MiB of memory, as outlined in Appendix C. 
    \item[] Guidelines:
    \begin{itemize}
        \item The answer NA means that the paper does not include experiments.
        \item The paper should indicate the type of compute workers CPU or GPU, internal cluster, or cloud provider, including relevant memory and storage.
        \item The paper should provide the amount of compute required for each of the individual experimental runs as well as estimate the total compute. 
        \item The paper should disclose whether the full research project required more compute than the experiments reported in the paper (e.g., preliminary or failed experiments that didn't make it into the paper). 
    \end{itemize}

\item {\bf Code Of Ethics}
    \item[] Question: Does the research conducted in the paper conform, in every respect, with the NeurIPS Code of Ethics \url{https://neurips.cc/public/EthicsGuidelines}?
    \item[] Answer: \answerYes{}{} 
    \item[] Justification: We conform with the NeurIPS code of ethics.

    \item[] Guidelines:
    \begin{itemize}
        \item The answer NA means that the authors have not reviewed the NeurIPS Code of Ethics.
        \item If the authors answer No, they should explain the special circumstances that require a deviation from the Code of Ethics.
        \item The authors should make sure to preserve anonymity (e.g., if there is a special consideration due to laws or regulations in their jurisdiction).
    \end{itemize}

\item {\bf Broader Impacts}
    \item[] Question: Does the paper discuss both potential positive societal impacts and negative societal impacts of the work performed?
    \item[] Answer: \answerYes{}{} 
    \item[] Justification: The societal impact statement is included in Appendix B. 
     \item[] Guidelines:
    \begin{itemize}
        \item The answer NA means that there is no societal impact of the work performed.
        \item If the authors answer NA or No, they should explain why their work has no societal impact or why the paper does not address societal impact.
        \item Examples of negative societal impacts include potential malicious or unintended uses (e.g., disinformation, generating fake profiles, surveillance), fairness considerations (e.g., deployment of technologies that could make decisions that unfairly impact specific groups), privacy considerations, and security considerations.
        \item The conference expects that many papers will be foundational research and not tied to particular applications, let alone deployments. However, if there is a direct path to any negative applications, the authors should point it out. For example, it is legitimate to point out that an improvement in the quality of generative models could be used to generate deepfakes for disinformation. On the other hand, it is not needed to point out that a generic algorithm for optimizing neural networks could enable people to train models that generate Deepfakes faster.
        \item The authors should consider possible harms that could arise when the technology is being used as intended and functioning correctly, harms that could arise when the technology is being used as intended but gives incorrect results, and harms following from (intentional or unintentional) misuse of the technology.
        \item If there are negative societal impacts, the authors could also discuss possible mitigation strategies (e.g., gated release of models, providing defenses in addition to attacks, mechanisms for monitoring misuse, mechanisms to monitor how a system learns from feedback over time, improving the efficiency and accessibility of ML).
    \end{itemize}

\item {\bf Safeguards}
    \item[] Question: Does the paper describe safeguards that have been put in place for responsible release of data or models that have a high risk for misuse (e.g., pretrained language models, image generators, or scraped datasets)?
    \item[] Answer: \answerNA{}{} 
    \item[] Justification: This paper does not involve the release of data or models that have a high risk for misuse. However, as mentioned in our impact statement included in Appendix B, we acknowledge potential misuses of our advancements in time series analysis and advocate for ethical application and regulatory oversight.
    \item[] Guidelines:
    \begin{itemize}
        \item The answer NA means that the paper poses no such risks.
        \item Released models that have a high risk for misuse or dual-use should be released with necessary safeguards to allow for controlled use of the model, for example by requiring that users adhere to usage guidelines or restrictions to access the model or implementing safety filters. 
        \item Datasets that have been scraped from the Internet could pose safety risks. The authors should describe how they avoided releasing unsafe images.
        \item We recognize that providing effective safeguards is challenging, and many papers do not require this, but we encourage authors to take this into account and make a best faith effort.
    \end{itemize}

\item {\bf Licenses for existing assets}
    \item[] Question: Are the creators or original owners of assets (e.g., code, data, models), used in the paper, properly credited and are the license and terms of use explicitly mentioned and properly respected?
    \item[] Answer: \answerYes{} 
    \item[] Justification: The owners of the datasets and packages used are acknowledged in the code and in the manuscript. 
    \item[] Guidelines:
    \begin{itemize}
        \item The answer NA means that the paper does not use existing assets.
        \item The authors should cite the original paper that produced the code package or dataset.
        \item The authors should state which version of the asset is used and, if possible, include a URL.
        \item The name of the license (e.g., CC-BY 4.0) should be included for each asset.
        \item For scraped data from a particular source (e.g., website), the copyright and terms of service of that source should be provided.
        \item If assets are released, the license, copyright information, and terms of use in the package should be provided. For popular datasets, \url{paperswithcode.com/datasets} has curated licenses for some datasets. Their licensing guide can help determine the license of a dataset.
        \item For existing datasets that are re-packaged, both the original license and the license of the derived asset (if it has changed) should be provided.
        \item If this information is not available online, the authors are encouraged to reach out to the asset's creators.
    \end{itemize}
   
\item {\bf New Assets}
    \item[] Question: Are new assets introduced in the paper well documented and is the documentation provided alongside the assets?
    \item[] Answer: \answerNA{}{} 
    \item[] Justification: This paper does not introduce new assets.
    \item[] Guidelines:
    \begin{itemize}
        \item The answer NA means that the paper does not release new assets.
        \item Researchers should communicate the details of the dataset/code/model as part of their submissions via structured templates. This includes details about training, license, limitations, etc. 
        \item The paper should discuss whether and how consent was obtained from people whose asset is used.
        \item At submission time, remember to anonymize your assets (if applicable). You can either create an anonymized URL or include an anonymized zip file.
    \end{itemize}

\item {\bf Crowdsourcing and Research with Human Subjects}
    \item[] Question: For crowdsourcing experiments and research with human subjects, does the paper include the full text of instructions given to participants and screenshots, if applicable, as well as details about compensation (if any)? 
    \item[] Answer: \answerNA{} 
    \item[] Justification: This paper does not include any experiments involving crowdsourcing or human subjects.
    \item[] Guidelines:
    \begin{itemize}
        \item The answer NA means that the paper does not involve crowdsourcing nor research with human subjects.
        \item Including this information in the supplemental material is fine, but if the main contribution of the paper involves human subjects, then as much detail as possible should be included in the main paper. 
        \item According to the NeurIPS Code of Ethics, workers involved in data collection, curation, or other labor should be paid at least the minimum wage in the country of the data collector. 
    \end{itemize}
    
\item {\bf Institutional Review Board (IRB) Approvals or Equivalent for Research with Human Subjects}
    \item[] Question: Does the paper describe potential risks incurred by study participants, whether such risks were disclosed to the subjects, and whether Institutional Review Board (IRB) approvals (or an equivalent approval/review based on the requirements of your country or institution) were obtained?
    \item[] Answer: \answerNA{} 
    \item[] Justification: This paper does not include any research involving human subjects.    
    \item[] Guidelines:
    \begin{itemize}
        \item The answer NA means that the paper does not involve crowdsourcing nor research with human subjects.
        \item Depending on the country in which research is conducted, IRB approval (or equivalent) may be required for any human subjects research. If you obtained IRB approval, you should clearly state this in the paper. 
        \item We recognize that the procedures for this may vary significantly between institutions and locations, and we expect authors to adhere to the NeurIPS Code of Ethics and the guidelines for their institution. 
        \item For initial submissions, do not include any information that would break anonymity (if applicable), such as the institution conducting the review.
    \end{itemize}
\end{enumerate}

\end{document}